%% file: main-icaps26.tex
\documentclass[letterpaper]{article} % DO NOT CHANGE THIS
\usepackage{aaai2026}  % DO NOT CHANGE THIS
\usepackage{times}  % DO NOT CHANGE THIS
\usepackage{helvet}  % DO NOT CHANGE THIS
\usepackage{courier}  % DO NOT CHANGE THIS
\usepackage[hyphens]{url}  % DO NOT CHANGE THIS
\usepackage{graphicx} % DO NOT CHANGE THIS
\usepackage{natbib}  % DO NOT CHANGE THIS AND DO NOT ADD ANY OPTIONS TO IT
\usepackage{caption} % DO NOT CHANGE THIS AND DO NOT ADD ANY OPTIONS TO IT
\usepackage{algorithm}
\usepackage{amsmath}
\usepackage{algpseudocode}

\usepackage{mathtools} % Defines bigtimes

\usepackage{newfloat}
\usepackage{listings}
\DeclareCaptionStyle{ruled}{labelfont=normalfont,labelsep=colon,strut=off} % DO NOT CHANGE THIS
\floatstyle{ruled}
\newfloat{listing}{tb}{lst}{}
\floatname{listing}{Listing}
\nocopyright

\renewcommand{\thefootnote}{\fnsymbol{footnote}}

\title{Planning as Goal Recognition: Deriving Heuristics from Intention Models (Extended Version)}
\author{
    Giacomo Rosa\textsuperscript{\rm 1,\footnotemark[1]}, Jean Honorio\textsuperscript{\rm 1,\footnotemark[1]}, Nir Lipovetzky\textsuperscript{\rm 1,\footnotemark[1]}, Sebastian Sardi\~na\textsuperscript{\rm 2,\footnotemark[4]}
}
\affiliations{
    \textsuperscript{\rm 1}The University of Melbourne, \textsuperscript{\rm 2}RMIT University \\
    Melbourne, Australia \\
    \footnotemark[1]\texttt{\{first.last\}@unimelb.edu.au}, \footnotemark[4]\texttt{sebastian.sardina@rmit.edu.au}
}

\usepackage{bibentry}
\usepackage{makecell}
\usepackage[percent]{overpic}
\usepackage{subcaption}

\usepackage{url}
\usepackage{amsmath}
\usepackage{amsthm}
\usepackage{color}
\usepackage[table]{xcolor}

\DeclareMathOperator*{\argmax}{arg\,max}

\input{macros}

\begin{document}
\maketitle
\begin{abstract}
Classical planning aims to find a sequence of actions, a plan, that maps a starting state into one of the goal states.
If a trajectory appears to be leading to the goal, should we prioritise exploring it?
Seminal work in goal recognition (GR) has defined GR in terms of a classical planning problem, adopting classical solvers and heuristics to recognise plans. We come full circle, and study the adoption and properties of GR-derived heuristics for seeking solutions to classical planning problems.
We propose a new divergence-based framework for assessing goal intention, which informs a new class of efficiently-computable heuristics.
As a proof of concept, we derive two such heuristics, and show that they can already yield improvements for top-scoring classical planners.
Our work provides foundational knowledge for understanding and deriving probabilistic intention-based heuristics for planning.
\end{abstract}

% Uncomment the following to link to your code, datasets, an extended version or similar.
% You must keep this block between (not within) the abstract and the main body of the paper.
% \begin{links}
%     % \link{Code}{https://aaai.org/example/code}
%     % \link{Datasets}{https://aaai.org/example/datasets}
%     \link{Extended version}{https://arxiv.org/abs/2603.14824}
% \end{links}

% \footnotetext[1]{Corresponding author: rosag@student.unimelb.edu.au}
\renewcommand{\thefootnote}{\arabic{footnote}}

%%%%%%%%%%%%%%%%%%%%%%%%%%%%%%%%%%%%%%%%%%%%%%%%%%%%%%%%%%%%%%%%%
\section{Introduction} \label{sec:intro}
%%%%%%%%%%%%%%%%%%%%%%%%%%%%%%%%%%%%%%%%%%%%%%%%%%%%%%%%%%%%%%%%%

We study the connection between goal recognition (GR) and classical planning  by characterizing planning heuristics as mechanisms for assessing their goal-intentionality,  thus doing a full loop from \citet{RamirezGeffner:IJCAI09,RamirezGeffner:AAAI10}'s seminal work on GR as planning.
Classical planning is the field of AI that seeks to find a sequence of actions, a \emph{plan}, that maps an initial state in a problem into a state that satisfies a specific goal condition. In classical planning, actions are deterministic, states are fully observable and represented through binary variables (\emph{facts}), and no other actions occur outside the plan (i.e., static environment).
A common strategy adopted by solvers is to perform a search over the state space, using heuristics which estimate the distance to the goal, such as the FF~\cite{bonet2001planning,hoffmann2001ff} and Landmark~\cite{porteous2001extraction} heuristics, to guide the search.
Other techniques have also proven to be effective in enhancing search efficiency. These include identifying \emph{helpful actions}~\cite{hoffmann2001ff}, which prioritise operators likely to contribute to goal achievement, leveraging \emph{novelty} measures~\cite{lipovetzky2012width,lipovetzky2017best}, which favor exploration of states exhibiting previously unseen combinations of features, and \emph{dominance pruning}~\cite{torralba2015simulation}, which prunes states that provably cannot yield better solutions than previously explored states.
% While these techniques do not estimate goal distance directly, they complement heuristic search by prioritizing states or nodes that aid the search through alternative mechanisms.
These techniques do not estimate goal distance directly; rather, they prioritize states or nodes that aid the search through alternative mechanisms.
% \\
% \\
% \\

The goal recognition task involves an observer inferring an acting agent’s goals or plans based on partial observations of its behaviour~\cite{Sukthankar.etal:BOOK14-PAIR}.\footnote{Other common terms are Plan Recognition (PR) or Intention Recognition (IR). While subtle differences exists among them, in this paper, we shall use these terms interchangeably.}
Traditional approaches rely on a predefined~\emph{plan library}, which encodes known plans for implicit goals, allowing recognition through matching observed actions to entries in the library~\cite{KautzAllen:AAAI86}. The \emph{goal recognition as planning} framework~\cite{RamirezGeffner:IJCAI09,RamirezGeffner:AAAI10} casts recognition as a planning problem itself: a (declarative) goal is considered more likely if the observed actions align with an optimal or near-optimal plan to achieve it.
Recent contributions have extended this paradigm to account for irrational behaviour of agents \cite{MastersSardina:AIJ21}, as well as adopting estimated measures and using information contained in the \emph{effects} of observed actions to recognise goals and plans \cite{pereira2017landmark,wilken2024fact}.

In light of these recent developments, we revisit search for resource-bounded agents that seek to perform an intelligent exploration.
Due to resource limitations, such agents bias the search towards some fragment of all possible traces~\cite{Pollack:AIJ92-IRMA,Bratman:CI88-IRMA}.
We analyze the intentionality of these traces with the lens of work in GR, under the intuition that  some traces are seen as observations that are \qq{more intended} towards the goal than others, and push our search algorithms to explore more intended paths.
This process is framed as a GR problem where, rather than estimating the intentionality of one observation towards multiple goals, we aim to assign and compare the intentionality of different observations towards a distinguished goal.

\nspar{Contributions}
Our primary objective is to establish a novel conceptual framework that views heuristics as judges of the intentionality of discovered trajectories in a planning problem.
We first present a plan-library model of GR for a resource-bounded agent in a planning domain. This allows us to define the goal-intentionality of observed traces, and study the properties of solvers that use this quantity to seek plans.
A core contribution that separates our work from prior GR approaches is a new formulation of intentionality in which the likelihood is derived from the divergence between information in the observation and that in a description of goal-intended trajectories. Our plan-library model and proposed heuristics are both shown to belong to this family of estimators.
Our theory then informs a new class of \emph{intention-based} heuristics for classical planning.
As a proof of concept, we present two such heuristics, which we show help improve the state of the art in classical planning benchmarks.
We tie our results to our framework, providing experimental evidence of theorised properties of our search paradigm.

%%%%%%%%%%%%%%%%%%%%%%%%%%%%%%%%%%%%%%%%%%%%%%%%%%%%
\section{Preliminaries} \label{section_preliminaries}
%%%%%%%%%%%%%%%%%%%%%%%%%%%%%%%%%%%%%%%%%%%%%%%%%%%%

The classical planning model is defined as $\Phi=\tuple{S, s_0, S_G, A, f}$, where $S$ is the discrete finite state space, $s_0 \in S$ is the initial state, $S_G \subseteq S$ is the set of goal states, $A$ is the set of (deterministic) actions, and $f: A \times S \mapsto S$ denotes the model partial transition function, with $f(a, s)$ denoting the next state $s' \in S$ after applying action $a \in A$ in state $s \in S$. When $f$ is undefined, the action is not applicable in the state. We write $A(s)$ to denote the set of actions applicable in state $s$, i.e., $A(s) = \set{a \in A \mid f(a, s) \text{ is defined}}$.
A solution to a classical planning model is given by a \defterm{plan}, a sequence of actions $\tuple{a_{0}, \ldots, a_{m}}$ that induces a state sequence $\tuple{s_0, \ldots, s_{m+1}}$
such that $a_{i} \in A(s_i)$, $s_{i+1} = f(a_i, s_i)$, and $s_{m+1} \in S_G$ for $i \in \set{0, \ldots, m}$.

A STRIPS\footnote{Stanford Research Institute Problem Solver}~\cite{fikes1971strips} problem is defined through tuple $\P=\tuple{F,A,I,G}$, where $F$ denotes the set of boolean variables, or fluents, $A$ is the set of actions $a$, $I \subseteq F$ is the set of atoms that fully describe the initial state, and $G \subseteq F$ is the partial assignment that describes goal states. We assume unit cost actions in this work.

\paragraph{Planning Model Notation.}
Given a Classical Planning problem $\P$, a \defterm{trajectory} denotes a sequence of alternating states and actions $\tuple{s_{k}, a_{k}, s_{k+1}, a_{k+1}, \ldots, s_{m}, a_{m}, s_{m+1}}$, where both the first and last elements are states, such that $s_{i} \in S$, $a_{i} \in A(s_{i})$, and $s_{{i+1}} = f(s_{i}, a_{i})$; where $k \leq i < m+1$.
Every trajectory induces two projections: an \defterm{a-trajectory}, which is the sequence of actions $\tuple{a_{k}, a_{k+1}, \ldots, a_{m}}$, and an \defterm{s-trajectory}, which is the sequence of states $\tuple{s_{k}, s_{k+1}, \ldots, s_{m+1}}$. We use $\fancy{T}(s_{i},\pi)$ to represent the s-trajectory induced by an a-trajectory $\pi$ applied from state $s_{i}$.
We use $\L(s,\pi)$ to denote the last state $s_{m+1} \in S$ in the s-trajectory $\fancy{T}(s, \pi)$.
For simplicity, when $s$ is $s_0$ (the initial state of $\P$ as per $I$), we just write $\fancy{T}(\pi)$ and $\L(\pi)$, resp.

We place two constraints on considered trajectories: \emph{1)} \defterm{acyclic}: no state may appear more than once in a trajectory; and \emph{2)} \defterm{non-goal-extending}: goal states can only appear as the last state of the trajectory.
These are reasonable assumptions, as any cycle is redundant and extending a-trajectories beyond a goal state is superfluous for finding plans towards a single goal.
Given $S_i, S_j \subseteq S$, we use $\Pi(S_i,S_j)$ to denote the set of \emph{acyclic} and \emph{non-goal-extending} a-trajectories
that can be applied to a state $s_i \in S_i$ to yield a valid s-trajectory that begins at state $s_i$ and ends at state $s_j \in S_j$.
A \emph{plan} is then an a-trajectory $\pi \in \Pi(\{s_0\}, S_G)$; in other words, an a-trajectory that, when applied to $s_0$, reaches a valid goal state $s_g \in S_G$.
Note that more than one sequence of actions (a-trajectories) may yield the same history of states (s-trajectories).
Similarly, we use \defterm{I-reachable a-trajectory} to refer to all a-trajectories $\pi \in \Pi(\{s_0\}, S)$.
We adopt the definition of an \defterm{observation sequence} from previous plan and goal recognition literature \cite{RamirezGeffner:IJCAI09,MastersSardina:AIJ21} as \emph{any} sequence of actions $\tuple{o_1, ..., o_m}$, with $o_i \in A$. An action sequence \defterm{satisfies} an observation sequence iff it \defterm{embeds} it, meaning that there is a monotonic function $g$ that maps each observation $o_i \in A$ to the index of an identical action in the action sequence such that $g(o_i) < g(o_j)$ for all $j>i$.
It follows from the above definitions that, given the set $\O$ of all possible observation sequences for problem $\P$, $\Pi(\{s_0\}, S_G) \subseteq \Pi(\{s_0\}, S) \subseteq \Pi(S,S) \subseteq \O$.

An a-trajectory $\pi'$ \defterm{contains} a-trajectory $\pi$, written $\pi \sqsubseteq \pi'$, iff there exist, possibly empty, sequences of actions $\alpha$ and $\beta$ such that $\pi' = \alpha
\cdot \pi \cdot \beta$.
This relation is reflexive, i.e., an a-trajectory contains itself. An a-trajectory $\pi'$ \defterm{extends} an a-trajectory $\pi$, written $\pi \sqsubseteq_{\mathrm{pfx}} \pi'$, iff there exists a, possibly empty, suffix $\beta$ such that
$\pi' = \pi \cdot \beta$.
We define the set of \defterm{maximal a-trajectories} $\M$ as the set of I-reachable a-trajectories that are not extended by any other I-reachable a-trajectory in
$\Pi(\{s_0\},S):\M = \set{\pi \in \Pi(\{s_0\},S) \mid \ \forall \pi' \in ( \Pi(\{s_0\},S) \setminus \set{\pi} ), \ \pi  \not \sqsubseteq_{\mathrm{pfx}} \pi'}$.
Thus, $\Pi(\{s_0\},S_G) \subseteq \M \subseteq \Pi(\{s_0\},S)$.
We also define the operator $\propto_{\text{rank}}$, which indicates that two quantities induce the same ranking (the ordering is preserved): $k \propto_{\text{rank}} l
\;:=\;
 k(x) < k(y) \iff l(x) < l(y).$

\section{Intention-Based Search} \label{section_goal_search_model}

Goal recognition problems assume two agents: an acting agent, which follows a (hidden) trajectory, and an observer agent, whose task is to infer the intention of the acting agent given an observation, a partial trace of the actor’s full trajectory. We adapt this idea to the search problem in planning by imagining a one-vs-all GR problem, where the actor is either directed or not directed towards a single goal, and with known initial state. The observer must determine whether the actor is following a plan that satisfies the problem’s goal given a partial trace, and can therefore be thought of as a heuristic that determines the goal intention of an observation. Given multiple observations in an open list, we can then rank them based on their assigned probability of being intended towards the goal, preferring higher ranked observations for expansion as a means of guiding the search.
%-
We begin by assuming that the observer may be resource bounded, and as such may not have knowledge of all maximal a-trajectories starting from $s_0$. Rather, it has prior knowledge of a non-empty subset $\hat{\M} \subseteq \M$ of sampled candidate maximal a-trajectories, which it uses to infer the actor’s goal-intention.
%-
The observer is also subject to beliefs regarding the behaviour of the acting agent, which are expressed by assigning a weight to every maximal a-trajectory, a measure of preference for that a-trajectory. For example, if the observer believes that the actor is rational, it will assume that it is more likely to follow optimal or near-optimal paths towards its objective, and as such assign greater weights to shorter maximal a-trajectories.
Alternatively, it may assign uniform weights if its belief is that the actor prioritizes all maximal a-trajectories equally.

\subsection{IRPL Model}

We provide the \defterm{I-Reachable Plan-Library} (IRPL) model, that only considers I-reachable a-trajectories in a planning problem as valid observation sequences, and derives probabilities relative to an implicit library of sampled maximal a-trajectories, and the subset of those a-trajectories that constitute plans. This allows us to illustrate the usefulness of adopting such probabilities as heuristic signals in a planning problem, under the simplified scenario where probabilistic events are explicitly observable.
Given problem description $\tuple{F, A, I, G}$ and the set of all I-reachable a-trajectories $\Pi(\{s_0\},S)$,
the observer samples a set $\hat{\M}$ of known maximal a-trajectories starting at the initial state $s_0$, and assigns a weight to each maximal a-trajectory according to a \defterm{weight function} \( w : \M \to \mathbb{R}^{\scriptscriptstyle +} \).
Let the set of sampled plans be $\hat{\M}_G = \Pi(\{s_0\}, S_G) \cap \hat{\M}$. For I-reachable a-trajectories $O \in \Pi(\{s_0\},S)$, let $C(O)= \set{ \pi' \in \hat{\M} \mid O \sqsubseteq_{\mathrm{pfx}} \pi'}$ be the set of all sampled maximal a-trajectories that extend $O$.
Let $C_G(O) = \set{\pi' \in \hat{\M}_G \mid O \sqsubseteq_{\mathrm{pfx}} \pi'}$ be the set of sampled plans that extend $O$, and  $C_{\neg G}(O) = \set{\pi' \in (\hat{\M} \setminus \hat{\M}_G) \mid O \sqsubseteq_{\mathrm{pfx}} \pi'}$ be the sampled maximal non-plans extending $O$, such that $C_G(O) \bigcup C_{\neg G}(O)=C(O)$ and $C_G(O) \bigcap C_{\neg G}(O)=\emptyset$.

We define sets of maximal a-trajectories $\E \subseteq \hat{\M}$ that imply underlying events of interest, with probability

\begin{equation}
P(\E) = \sum_{\pi' \in \E} w(\pi') / \sum_{\pi'' \in \hat{\M}}w(\pi''). \nonumber
\end{equation}

\noindent Thus, $P(G)$ is the event that $\pi \in \hat{\M}$ is a plan:

\begin{equation}
P(G) := P(\hat{\M}_G) = \sum_{\pi' \in \hat{\M}_G} w(\pi') / \sum_{\pi'' \in \hat{\M}}w(\pi'').
\end{equation}\label{eq_p(G)}

\noindent Similarly, $P(\neg G):=P(\hat{\M} \setminus \hat{\M}_G)$.
$P(O)$ is the event that $\pi \in \hat{\M}$ extends $O$:
\begin{align}\label{eq_p(pi)}
    P(O) := P(C(O)) = \sum_{\pi' \in C(O)} w(\pi') / \sum_{\pi'' \in \hat{\M}}w(\pi'').
\end{align}

\noindent We can then obtain conditional probability $P(O \mid G) := P(C(O) \mid \hat{\M}_G)$:
\begin{equation} \label{eq_p(pi,G)}
    P(O \mid G) = \sum_{\pi' \in C_G(O)} w(\pi') / \sum_{\pi'' \in \hat{\M}_G}w(\pi''),
\end{equation}
\noindent
\noindent where $C_G(O) = C(O) \bigcap \hat{\M}_G$.\footnote{
    These probabilities are well-defined: $P(G)+P(\neg G)=1$, and $P(O)+P(\neg O)=1$, where $P(\neg O):=P(\hat{\M} \setminus C(O))$; calculating $P(O,G)=P(O \mid G) \cdot P(G)$ and using $P(\neg G)$ to obtain $P(O,\neg G)$, then $P(O,G)+P(O,\neg G)=P(O)$.
}
\noindent Finally, Bayesian posterior $P(G \mid O)$ becomes the weight of all sampled plans to the goal extending $O$, over the weight of all sampled maximal a-trajectories extending $O$:
\begin{equation} \label{eq_p(G|pi)}
    P(G\mid O) = \frac{P(O\mid G)\cdot P(G)}{P(O)} = \frac{ \sum_{\pi'' \in C_G(O)} w(\pi'') } { \sum_{\pi' \in C(O)} w(\pi')}.
\end{equation}

Any set of a-trajectories for which probabilities $P(O \mid G)$ and $P(G \mid O)$ are defined for all elements in the set can then be ranked to favour trajectories that are more goal-intended according to either likelihoods or posteriors, respectively,
\begin{equation} \label{argmax_1}
    \argmax_O P(O \mid G) = \argmax_O \sum_{\pi' \in C_G(O)} w(\pi'),
\end{equation}
\begin{equation} \label{argmax_2}
    \argmax_O P(G \mid O) = \argmax_O \frac{ P(O \mid G) } { P(O \mid \neg G) },
\end{equation}
\noindent where equation~\ref{argmax_2} is obtained by simplifying $\argmax_O P(G \mid O)/P(\neg G \mid O)$ and noting that $P(G)$ and $P(\neg G)$ are constant when considering a single goal in planning problems.
When extending the domain of conditional probabilities to the set of all possible observations in a planning problem, we adopt the convention of setting undefined probabilities to 0.
This reflects an observer that assumes unknown trajectories are not goal directed.

\subsection{Framework Properties}\label{section:framework_properties}
We study the properties of a search guided by Equations~\ref{argmax_1} and \ref{argmax_2}. We first state the results, followed by analysis.

\begin{claim} \label{obs_p_1}
    Given non-empty $\hat{\M}$ and $\hat{\M}_G$, any $w$, and a-trajectories $O_e$ extending an a-trajectory $O_p$ by one action, $P(O_e \mid G) \leq P(O_p \mid G)$, as $C_G(O_e) \subseteq  C_G(O_p)$.
\end{claim}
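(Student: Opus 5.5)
The argument reduces the inequality to set containment, then uses monotonicity of a sum of positive weights over a fixed denominator. We work directly from the definition of the likelihood in Equation~\ref{eq_p(pi,G)}: for any I-reachable a-trajectory $O$,
\[
P(O \mid G) = \sum_{\pi' \in C_G(O)} w(\pi') \Big/ \sum_{\pi'' \in \hat{\M}_G} w(\pi'').
\]
The denominator does not depend on $O$, and it is strictly positive because $\hat{\M}_G$ is non-empty and $w$ takes values in $\mathbb{R}^{+}$. The comparison between $O_e$ and $O_p$ therefore comes down to comparing the numerators. Since every weight is positive, it suffices to show $C_G(O_e) \subseteq C_G(O_p)$.

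\textbf{Containment.} Write $O_e = O_p \cdot a$ for some action $a$, so $O_p \sqsubseteq_{\mathrm{pfx}} O_e$ with suffix $\beta = \tuple{a}$. Take any $\pi' \in C_G(O_e)$. Then $\pi' \in \hat{\M}_G$ and $O_e \sqsubseteq_{\mathrm{pfx}} \pi'$, so $\pi' = O_e \cdot \gamma$ for some possibly empty $\gamma$. Hence $\pi' = O_p \cdot (a \cdot \gamma)$, which means $O_p \sqsubseteq_{\mathrm{pfx}} \pi'$. This is just transitivity of the extension relation, immediate from associativity of concatenation. Therefore $\pi' \in C_G(O_p)$.

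\textbf{Conclusion.} Since $C_G(O_e) \subseteq C_G(O_p)$ and all weights are positive, $\sum_{C_G(O_e)} w \le \sum_{C_G(O_p)} w$. Dividing by the common positive denominator gives $P(O_e \mid G) \le P(O_p \mid G)$.

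\textbf{Edge case: undefined probabilities.} This is the only step needing care. If $O_e$ is not I-reachable, or lies outside the domain where $P(\cdot\mid G)$ is defined, the paper's convention sets its probability to $0$. The inequality still holds because $P(O_p \mid G) \ge 0$. If $O_p$ is itself I-reachable while $O_e$ is not, this convention handles it as well. If $O_p$ is undefined, then $O_e$ is too: an extension of a non-I-reachable sequence is not I-reachable, so both sides are $0$. No further obstacle is expected.
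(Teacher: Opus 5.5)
Your proof is correct and follows the paper's reasoning: the paper justifies the claim only by the containment $C_G(O_e) \subseteq C_G(O_p)$. You derive that containment from transitivity of $\sqsubseteq_{\mathrm{pfx}}$ and combine it with positive weights over the $O$-independent denominator $\sum_{\pi'' \in \hat{\M}_G} w(\pi'')$, and your edge-case treatment agrees with the paper's zero convention (indeed $C_G(O)$ is already empty for non-I-reachable $O$, since every prefix of an I-reachable a-trajectory is I-reachable, so the formula itself gives $0$).
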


\begin{claim} \label{obs_p_2}
    $P(O \mid G)=0$ and $P(G \mid O)=0$ for all a-trajectories $O$ that are not extended by any plan $\pi' \in \hat{\M}_G$.
\end{claim}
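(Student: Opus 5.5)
The argument is a direct unfolding of the definitions in Equations~\ref{eq_p(pi,G)} and \ref{eq_p(G|pi)}, together with the convention that undefined conditional probabilities are set to $0$. Fix an a-trajectory $O$ that is not extended by any plan in $\hat{\M}_G$, so that no $\pi' \in \hat{\M}_G$ satisfies $O \sqsubseteq_{\mathrm{pfx}} \pi'$. By definition, $C_G(O) = \set{\pi' \in \hat{\M}_G \mid O \sqsubseteq_{\mathrm{pfx}} \pi'}$, so the hypothesis says exactly that $C_G(O) = \emptyset$. Every remaining step depends on this emptiness.

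I treat the likelihood first. If $O$ is I-reachable and $\hat{\M}_G$ is non-empty, Equation~\ref{eq_p(pi,G)} has a numerator that is a sum over the empty set $C_G(O)$, hence $0$. Its denominator $\sum_{\pi'' \in \hat{\M}_G} w(\pi'')$ is strictly positive, because $w$ takes values in $\mathbb{R}^{\scriptscriptstyle +}$ and the sum is non-empty. So $P(O \mid G) = 0$. If $\hat{\M}_G$ is empty, or $O$ lies outside $\Pi(\{s_0\},S)$, the quantity is undefined and is set to $0$ by the stated convention.

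For the posterior I use the closed form in Equation~\ref{eq_p(G|pi)}, namely $P(G \mid O) = \sum_{\pi'' \in C_G(O)} w(\pi'') / \sum_{\pi' \in C(O)} w(\pi')$. When $C(O) \neq \emptyset$, the denominator is positive and the numerator is the empty sum, so $P(G \mid O) = 0$. One could also argue via Bayes: $P(O \mid G) = 0$ forces $P(G \mid O) = P(O\mid G)P(G)/P(O) = 0$ whenever $P(O) > 0$.

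The only real obstacle is the degenerate case $C(O) = \emptyset$, where the ratio is $0/0$. This happens when $O$ is I-reachable but no sampled maximal a-trajectory extends it, or when $O$ is not I-reachable at all. Here I invoke the convention that undefined probabilities are set to $0$, which gives $P(G \mid O) = 0$ as well. With these cases covered, both equalities hold for every such $O$.
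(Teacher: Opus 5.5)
Your proof is correct and follows the paper's own reasoning. The paper states this claim without a separate proof, treating it as immediate from Equations~\ref{eq_p(pi,G)} and \ref{eq_p(G|pi)}: the hypothesis means $C_G(O)=\emptyset$, so both numerators are empty sums, and the degenerate $0/0$ and non-I-reachable cases are covered by the convention that undefined probabilities are set to $0$. Your explicit case split is a more careful spelling-out of that same observation.
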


\begin{lemma} \label{lemma_gen_1}
    Given non-empty $\hat{\M}$ and $\hat{\M}_G$, and any $w$, $\max_{O_e} P(G \mid O_e) \geq P(G \mid O_p)$.
\end{lemma}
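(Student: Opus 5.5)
Write $W(X)=\sum_{\pi\in X}w(\pi)$ for any $X\subseteq\hat{\M}$. By Equation~\ref{eq_p(G|pi)}, $P(G\mid O)=W(C_G(O))/W(C(O))$ whenever $C(O)\neq\emptyset$, and it is $0$ by convention otherwise. The maximum ranges over the one-action extensions $O_e$ of $O_p$. The plan is to treat the set $C(O_p)$ as a disjoint union of the sets $C(O_e)$, and then use the fact that a ratio of sums is a weighted average of the individual ratios.

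\textbf{Step 1 (trivial cases).} If $C(O_p)=\emptyset$, or more generally if $C_G(O_p)=\emptyset$, then $P(G\mid O_p)=0$ by Claim~\ref{obs_p_2} and the convention. Since every posterior is nonnegative, the inequality holds as long as at least one extension $O_e$ exists. If $O_p$ has no extension at all, the maximum is over an empty set. In that case I would argue that the lemma is only meant when $O_p$ can be extended, for example when $O_p$ is a non-goal, non-maximal state in the open list. I will state this implicitly assumed condition explicitly.

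\textbf{Step 2 (partition).} Now assume $C(O_p)\neq\emptyset$. Every $\pi'\in C(O_p)$ extends $O_p$. There are two possibilities.
\begin{itemize}
\item $\pi'=O_p$. Then $O_p$ is itself maximal, so it has no I-reachable extension and we are back in the degenerate case above.
\item $\pi'$ is strictly longer than $O_p$. Then it has a unique next action $a$, and $\pi'\in C(O_p\cdot a)$.
\end{itemize}
Hence $C(O_p)=\biguplus_a C(O_p\cdot a)$ and likewise $C_G(O_p)=\biguplus_a C_G(O_p\cdot a)$, where $a$ ranges over actions with $C(O_p\cdot a)\neq\emptyset$. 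Each such $O_p\cdot a$ is I-reachable, because it is a prefix of an I-reachable a-trajectory, so these extensions are admissible $O_e$. By additivity of $W$ over disjoint sets,
\[P(G\mid O_p)=\frac{\sum_a W(C_G(O_p\cdot a))}{\sum_a W(C(O_p\cdot a))}.\]

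\textbf{Step 3 (mediant inequality).} Since $w>0$, each denominator $W(C(O_p\cdot a))$ is positive. The displayed ratio is therefore a convex combination of the individual ratios $W(C_G(O_p\cdot a))/W(C(O_p\cdot a))=P(G\mid O_p\cdot a)$, with weights $W(C(O_p\cdot a))/W(C(O_p))$. A convex combination is bounded by the maximum of its terms, which gives $P(G\mid O_p)\le\max_a P(G\mid O_p\cdot a)\le\max_{O_e}P(G\mid O_e)$.

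The main obstacle is Step 2. It needs the partition to be exact, which requires handling the case where $O_p$ itself belongs to $\hat{\M}$ and checking that the extensions are well-defined observations in the domain being maximized over. The averaging argument in Step 3 is routine.
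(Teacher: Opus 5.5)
Your proposal is correct and takes the same approach as the paper. Both arguments split $C(O_p)$ and $C_G(O_p)$ disjointly over the one-action children and then write $P(G\mid O_p)$ as a weighted average of the children's posteriors, with weights $W(C(O_e))/W(C(O_p))$. Your explicit handling of the degenerate cases ($C(O_p)=\emptyset$, $O_p\in\hat{\M}$ so that $C(O_p)=\{O_p\}$, and $O_p$ having no admissible extension) and your check that each $O_p\cdot a$ is I-reachable tighten points that the paper's sketch assumes silently.
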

\begin{proof}[Proof sketch]
    Every maximal a-trajectory extending $O_p$ extends exactly one child $O_e$; hence
    $C(O_p)=\bigcup_{O_e} C(O_e)$ and $C_G(O_p)=\bigcup_{O_e} C_G(O_e)$, and these
    unions are disjoint. Let $X(O)=\sum_{\pi' \in C(O)} w(\pi')$ and define $X_G(O)$
    analogously. Then $X(O_p)=\sum_{O_e} X(O_e)$ and $X_G(O_p)=\sum_{O_e} X_G(O_e)$, so
    \[
    P(G\mid O_p)=\frac{X_G(O_p)}{X(O_p)}
    =\sum_{O_e}\frac{X(O_e)}{X(O_p)}\frac{X_G(O_e)}{X(O_e)}.
    \]
    Thus $P(G\mid O_p)$ is a weighted average of its children and therefore cannot
    exceed all of them.
\end{proof}
\begin{theorem} \label{theorem_g_pi_expansions}
    Given non-empty $\hat{\M}$ and $\hat{\M}_G$, and any $w$, a planner that expands $\max P(G \mid O)$ and breaks ties by greater trajectory length, will find a  plan in number of expansions $m \leq \max_{\pi' \in \hat{\M}_G}\card{\pi'}$.
\end{theorem}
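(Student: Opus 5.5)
We follow the search as a sequence of expansions starting from the root observation $O_0=\tuple{}$ (the empty a-trajectory at $s_0$). The invariant we maintain is this: after the $k$-th expansion, the open list contains a node $O_k$ of length $k$ with $P(G\mid O_k)\geq P(G\mid O_0)=P(G)>0$. Moreover, $O_k$ is extended by some sampled plan in $\hat{\M}_G$. The root satisfies this because $C(O_0)=\hat{\M}$ and $C_G(O_0)=\hat{\M}_G\neq\emptyset$, so $P(G\mid O_0)=P(G)>0$, since $w$ is positive.

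For the inductive step we first locate the node the planner actually expands. Let $p^\ast$ be the maximum posterior on the open list. We argue that the expanded node $O$ satisfies $P(G\mid O)=p^\ast>0$. By Claim~\ref{obs_p_2}, $O$ is therefore extended by some plan in $\hat{\M}_G$. Since trajectories are non-goal-extending, either $O$ is itself a plan (and we stop) or it has children. In the latter case, Lemma~\ref{lemma_gen_1} gives a child $O_e$ with $P(G\mid O_e)\geq P(G\mid O)=p^\ast$, and that child is again extended by a sampled plan. So after the expansion the open-list maximum is at least $p^\ast$, and some maximiser is a child of the node just expanded. Ties are broken by greater length, so the next node expanded has length at least $|O|+1$.

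The crux is to show that the expanded nodes form a chain $O_0\sqsubseteq_{\mathrm{pfx}}O_1\sqsubseteq_{\mathrm{pfx}}\cdots$ of strictly increasing length, each with positive posterior. The point is that the maximum posterior on the open list never decreases, because Lemma~\ref{lemma_gen_1} always adds a child at least as good. Any node still on the list that attains the current maximum is strictly shorter than the fresh child, so tie-breaking by length prefers the child. A node on the list with strictly larger posterior cannot exist, since it would have been chosen earlier when the maximum was smaller. This monotonicity argument is where I expect care to be needed: the maximum is nondecreasing, so no older node can overtake the newest child.

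Hence the $k$-th expansion is a node of length exactly $k-1$ (or at least $k-1$), lying on a prefix of some sampled plan $\pi'\in\hat{\M}_G$. A prefix of $\pi'$ has length at most $\card{\pi'}$, and when the length equals $\card{\pi'}$ the node is $\pi'$, a plan. Sampled plans are maximal, so a proper prefix has children and a positive-posterior node is a plan exactly when it is maximal in the relevant sense. The chain must therefore reach a plan within $\max_{\pi'\in\hat{\M}_G}\card{\pi'}$ expansions. Counting goal detection at generation versus expansion may shift the bound by one; I would fix the convention that the plan is found when its final node is generated or selected, matching the stated $m\leq\max\card{\pi'}$.
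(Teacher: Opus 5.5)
Your proposal is correct and follows essentially the same route as the paper's proof. A positive root posterior together with Claim~\ref{obs_p_2} guarantees a sampled plan below each expanded node, Lemma~\ref{lemma_gen_1} supplies a child whose posterior is at least its parent's, tie-breaking by greater length makes that child the next expansion, and induction along the resulting chain finishes the argument. Your additional details (older open-list nodes can never overtake the fresh child because they are no better and strictly shorter, and the bound comes from counting chain lengths) make explicit what the paper's short inductive argument leaves implicit.
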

\begin{proof}
    The first expanded node has $P(G\mid O)$ greater than or equal to all others. Since $P(G\mid O)>0$,
    Claim~\ref{obs_p_2} ensures that at least one plan to the goal exists, and by
    Lemma~\ref{lemma_gen_1} its best child has probability $\ge$ that of its parent. Since it also has greater trajectory length, it is
    expanded next; induction completes the argument.
\end{proof}

\begin{lemma} \label{lemma_gen_2}
Let $\card{\hat{\mathcal{M}}_G} > 0$. Suppose $w$ is a weight function such that $w(\pi) > w(\pi') \iff \text{cost}(\pi) < \text{cost}(\pi')$.
For a planner that expands a-trajectories in order of $\max P(O \mid G) $, the first expanded goal node is guaranteed to be a minimal-cost plan among all plans in the sample \( \hat{\mathcal{M}} \).
The result also holds under the weaker condition $\text{cost}(\pi) \leq \text{cost}(\pi') \Rightarrow w(\pi) \geq w(\pi')$,
provided ties in $P(O \mid G)$ are broken by preferring shorter trajectories.
\end{lemma}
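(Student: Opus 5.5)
The plan is a best-first search argument built on the monotonicity in Claim~\ref{obs_p_1}. Write $Z=\sum_{\pi''\in\hat{\M}_G}w(\pi'')>0$, so that $P(O\mid G)=X_G(O)/Z$ with $X_G(O)=\sum_{\pi'\in C_G(O)}w(\pi')$. Ranking by $P(O\mid G)$ is then the same as ranking by $X_G$. Two facts drive everything.

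\begin{itemize}
\item \textbf{Monotonicity.} By Claim~\ref{obs_p_1}, $X_G$ is non-increasing along extensions.
\item \textbf{Value at a goal node.} If $O\in\hat{\M}_G$ is a plan, then $C_G(O)=\{O\}$. Indeed, any $\pi'\in\hat{\M}$ extending $O$ must equal $O$, because plans are non-goal-extending (equivalently, $O$ is maximal). Hence $X_G(O)=w(O)$.
\end{itemize}

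Let $\pi^\ast\in\hat{\M}_G$ be a sampled plan of minimal cost. It exists because $\card{\hat{\M}_G}>0$ and the set is finite.

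\textbf{Strict condition.} Suppose the first expanded goal node is a plan $\pi_g$ with $\text{cost}(\pi_g)>\text{cost}(\pi^\ast)$. By hypothesis $w(\pi^\ast)>w(\pi_g)$. I use the standard open-list invariant: at every point before $\pi^\ast$ is expanded, some prefix $O'\sqsubseteq_{\mathrm{pfx}}\pi^\ast$ is in the open list. This holds initially, since the empty trajectory (the root) is a prefix of $\pi^\ast$. When such a prefix is expanded, its child along $\pi^\ast$ is generated. The search is over trajectories, i.e.\ a tree, so no duplicate pruning removes it.

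At the moment $\pi_g$ is selected, that prefix $O'$ satisfies $\pi^\ast\in C_G(O')$. Therefore $X_G(O')\ge w(\pi^\ast)>w(\pi_g)=X_G(\pi_g)$, so $O'$ would have been selected before $\pi_g$, a contradiction. Applying the same argument repeatedly shows that every prefix of $\pi^\ast$, and $\pi^\ast$ itself, is expanded before any strictly costlier plan.

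\textbf{Weak condition.} Now assume only $\text{cost}(\pi)\le\text{cost}(\pi')\Rightarrow w(\pi)\ge w(\pi')$. Suppose a costlier $\pi_g$ is selected while a prefix $O'$ of $\pi^\ast$ is open. Then $X_G(O')\ge w(\pi^\ast)\ge w(\pi_g)=X_G(\pi_g)$. If the inequality is strict we contradict the selection as before. Otherwise $X_G(O')=X_G(\pi_g)$, and tie-breaking prefers the shorter trajectory. Under unit costs, $\card{O'}\le\card{\pi^\ast}=\text{cost}(\pi^\ast)<\text{cost}(\pi_g)=\card{\pi_g}$, so $O'$ is strictly shorter and wins the tie. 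Again $O'$ is expanded first, and inducting along $\pi^\ast$ gives the claim.

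\textbf{Main obstacle.} The delicate step is the open-list invariant together with the identity $X_G(\pi_g)=w(\pi_g)$.
\begin{itemize}
\item The identity depends on the non-goal-extending constraint. This is what makes a goal node's score depend only on its own weight, rather than on longer plans through it.
\item The invariant requires that the relevant prefix of $\pi^\ast$ is not discarded. It also needs the convention that nodes with $P(O\mid G)=0$ (Claim~\ref{obs_p_2}) are still kept, merely ranked last; they never block prefixes of $\pi^\ast$, which have positive score.
\end{itemize}
I would state these search-model assumptions explicitly: tree search over a-trajectories, goal-test on expansion, and unit costs so that cost equals length. The rest is the comparison chain above.
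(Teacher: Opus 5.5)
Your argument is correct and follows the paper's own proof. Assume a costlier goal node $\pi_g$ is expanded first. Some prefix $O'$ of a cheapest sampled plan $\pi^\ast$ must still be open, and $P(O'\mid G)\ge w(\pi^\ast)/\sum_{\pi''\in\hat{\M}_G}w(\pi'')$ because prefixes aggregate the weight of their extensions; comparing with the goal node's own score, and using the shorter-length tie-break for the weak condition, gives the contradiction. You also make explicit what the sketch leaves implicit: the open-list invariant over a tree of a-trajectories, $C_G(\pi_g)=\{\pi_g\}$ by non-goal-extension, and $\card{O'}\le\card{\pi^\ast}<\card{\pi_g}$ under unit costs. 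The one case you skip, an unsampled goal node, is harmless, since its score is $0$ rather than $w(\pi_g)$, which only strengthens your inequality.
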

\begin{proof}[Proof sketch]
    Assume the first goal-reaching a-trajectory $\pi$ expanded is not minimal cost, and let
    $\pi^*$ be a cheaper plan. If $\pi^*$ is fully generated, then
    $P(\pi^*\mid G)>P(\pi\mid G)$ since $w(\pi^*)>w(\pi)$, contradicting that $\pi$
    was expanded first. If instead only its prefix $\rho$ is generated, then
    $P(\rho\mid G)\ge P(\pi^*\mid G)$ because prefixes aggregate the weight of all
    their extensions and these values decrease monotonically with depth. Thus
    $\rho$ (or $\pi^*$ when complete) would have been expanded before $\pi$.
    Hence the first expanded goal-achieving trajectory must correspond to a
    minimal-cost plan. When different-cost plans have equal weight, tie-breaking
    by shorter length selects the minimal-cost one first.
\end{proof}

\begin{lemma} \label{lemma_gen_4}
    A planner that expands a-trajectories in order of $\max P(O \mid G)$ will expand at most
    $\sum_{\pi \in \hat{\M}_G}({\card{\pi}}-1) + 1$
    nodes before expanding a goal, and $\sum_{\pi \in \hat{\M}_G}({\card{\pi}}-2) + 1$ nodes before generating a goal.
\end{lemma}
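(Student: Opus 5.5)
The plan is to count the expanded nodes that carry positive likelihood, and to show that zero-likelihood nodes are never expanded before a goal. I take the search to start from the empty a-trajectory at $s_0$, with $P(\emptyset\mid G)=1$. By Claim~\ref{obs_p_2}, $P(O\mid G)>0$ exactly when $O$ is a prefix of some sampled plan in $\hat{\M}_G$. The first step is an invariant: at every point before a goal is expanded, the open list contains at least one node with positive likelihood. This holds initially. Expanding a node $O$ with $P(O\mid G)>0$ that is not a goal generates every child. Since $O$ is a proper prefix of some $\pi\in\hat{\M}_G$, the child along $\pi$ is also a prefix of $\pi$, and so it has positive likelihood. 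Because the planner always picks a maximum, it never selects a zero-likelihood node while a positive one is open. Hence every node expanded before the first goal is a prefix of some sampled plan.

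The second step counts these prefixes. The set of all prefixes of plans in $\hat{\M}_G$ forms a trie rooted at the empty trajectory. A plan $\pi$ contributes its prefixes $\langle a_0,\dots,a_{j}\rangle$, and together with the root there are $\card{\pi}+1$ nodes on its path. Here I read $\card{\pi}$ as the number of states in the s-trajectory, so a plan has $\card{\pi}-1$ actions, and the count has to be matched to the paper's convention. Fix an order of the plans. The first path contributes all its non-goal nodes. Every later path shares at least the root, so it adds at most its own non-root, non-goal nodes. Non-goal-extending trajectories have goals only at their leaves, and goal leaves are not expanded before the first goal expansion. So the number of distinct non-goal expanded nodes is at most $1+\sum_{\pi}(\text{number of non-root, non-goal prefixes of }\pi)$. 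Adding the expansion of the goal itself gives $\sum_{\pi\in\hat{\M}_G}(\card{\pi}-1)+1$. Acyclicity together with the tree search over a-trajectories ensures each prefix is expanded at most once.

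For the generation bound, I use the fact that a goal is generated as soon as the parent of any plan's goal leaf is expanded. Before that happens, the expanded nodes are prefixes that are neither goals nor last-before-goal nodes. This removes one more node per plan path, and the same union count gives $\sum_{\pi}(\card{\pi}-2)+1$. The one subtlety is that the final expansion, the one that generates the goal, is the $+1$ term.

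The main obstacle is getting the off-by-one bookkeeping right: which length convention $\card{\pi}$ uses, and whether the root is counted inside each path or only once. I would fix the convention by checking the single-plan case, where a plan with $k$ actions should give $k+1$ expansions including the root and the goal. I would then verify that the shared root is what justifies the single $+1$ rather than a $+1$ per plan.
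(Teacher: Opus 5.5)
\textbf{Verdict: essentially the paper's argument, but the counting step fails as written.} Your overall argument matches the paper's. Claim~\ref{obs_p_2}, together with $w>0$ (which supplies the converse direction you use), confines every expansion before the first goal to prefixes of sampled plans, and the worst case is plans that share only $s_0$. Your explicit invariant, that a positive-likelihood node is always open until a goal is expanded, is a useful step the paper leaves implicit. The gap is in the counting, which is the actual content of this lemma. The single-plan check you propose cannot settle it.

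\textbf{The paper's convention.} In the paper, $\card{\pi}$ is the number of actions of the a-trajectory $\pi$. The bounds count expansions strictly before the goal expansion (respectively, strictly before the expansion that generates a goal), and the $+1$ is the shared initial state counted once. A plan with $\card{\pi}$ actions has $\card{\pi}$ non-goal nodes, of which $\card{\pi}-1$ are non-root. So the union has at most $\sum_{\pi\in\hat{\M}_G}(\card{\pi}-1)+1$ non-goal nodes. Dropping each plan's last-before-goal node gives $\sum_{\pi\in\hat{\M}_G}(\card{\pi}-2)+1$, where again the $+1$ is the root, not the generating expansion.

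\textbf{Where your count goes wrong.} With your reading ($\card{\pi}$ counts states) and the $+1$ assigned to the goal expansion, your own count is $1+\sum_{\pi}(\card{\pi}-2)+1$. You assert this equals $\sum_{\pi}(\card{\pi}-1)+1$, but that holds only when $\card{\hat{\M}_G}=1$; otherwise your count is smaller by $\card{\hat{\M}_G}-1$. So you reach the inequality, but through a false equality and a non-tight reading. Under the paper's convention, adding the goal expansion would overshoot the bound by one. The same misattribution of the $+1$ affects your generation bound.

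\textbf{Why the single-plan check does not help.} For a $k$-action plan, both readings predict $k$ non-goal expansions followed by the goal expansion, so that case cannot tell them apart. Two plans sharing only $s_0$ do distinguish them: there the worst case is attained exactly under the paper's reading, as its proof asserts.

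\textbf{Minor point.} Your step of removing ``one more node per plan path'' presupposes plans with at least two actions. For a one-action plan, the last-before-goal node is the shared root.
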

\begin{proof}
    Follows from Claim~\ref{obs_p_2} that at most all non-goal nodes in s-trajectories implied by plans in $\hat{\M}_G$ will be expanded before expanding a goal node. In the worst case, plans in $\hat{\M}_G$ do not overlap and only share the initial state, which is counted once.
\end{proof}

\begin{theorem}
\label{theorem_pi_g_num_samples}
    Given non-empty $\hat{\M}$ and $\hat{\M}_G$. For a planner that expands a-trajectories in order of $\max P(O \mid G)$; as samples are added to $\hat{\M}_G$, the length of the first expanded plan is non-increasing, and
    the worst-case number of expansions is non-decreasing.
\end{theorem}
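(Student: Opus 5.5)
The plan is to treat the two monotonicity claims separately and to reduce each to an earlier result, by comparing a sample $\hat{\M}_G$ with an enlarged sample $\hat{\M}_G' = \hat{\M}_G \cup \set{\pi^+}$, where $\pi^+ \in \Pi(\{s_0\},S_G)$. Induction over single additions then gives the general statement.

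The preliminary step is to record what the ordering depends on. By Equation~\ref{argmax_1}, ranking by $P(O \mid G)$ is the same as ranking by $\sum_{\pi' \in C_G(O)} w(\pi')$. The normalising denominator $\sum_{\pi'' \in \hat{\M}_G} w(\pi'')$ is common to all $O$, so it does not affect the order. Likewise, adding samples to $\hat{\M}\setminus\hat{\M}_G$ leaves every $C_G(O)$ unchanged. Hence only additions to $\hat{\M}_G$ matter, and each such addition only raises the scores of the prefixes of $\pi^+$, each by the same amount $w(\pi^+)$.

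\emph{First claim (length of the first expanded plan).} Here I would invoke Lemma~\ref{lemma_gen_2}. I read the theorem as inheriting that lemma's hypothesis that $w$ is cost-monotone, with ties broken by shorter length. Under unit costs, cost equals length, so the lemma gives that the first expanded goal node is a minimum-length plan in $\hat{\M}_G$. Its length is therefore $\min_{\pi\in\hat{\M}_G}\card{\pi}$. Because a minimum taken over a superset can only be smaller or equal, $\min_{\pi\in\hat{\M}_G'}\card{\pi} \le \min_{\pi\in\hat{\M}_G}\card{\pi}$, and the length of the first expanded plan is non-increasing. This is the step I expect to be the main obstacle. For an arbitrary $w$ the claim can fail: adding a long plan with a huge weight can pull the search onto it. So the proof must explicitly adopt the weight assumption of Lemma~\ref{lemma_gen_2}, or else interpret the claim as referring to the minimal-cost plan that the lemma guarantees is found. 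I would state this dependence explicitly rather than hide it.

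\emph{Second claim (worst-case number of expansions).} Here I would use the bound of Lemma~\ref{lemma_gen_4}, $B(\hat{\M}_G)=\sum_{\pi\in\hat{\M}_G}(\card{\pi}-1)+1$, whose worst case is attained when the sampled plans share only $s_0$. That lemma's proof describes such a configuration: plans that are disjoint apart from the initial state, with the plan of largest $P(O\mid G)$ explored last. Adding $\pi^+$ gives $B(\hat{\M}_G') = B(\hat{\M}_G) + (\card{\pi^+}-1)$. Since every plan has $\card{\pi^+}\ge 1$, the added term is non-negative, so $B$ is non-decreasing. The same computation works for the "before generating" bound with $\card{\pi^+}-2$, where the term is read as $\max(0,\cdot)$ for trivial plans. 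To make clear that this is a statement about the worst case rather than a particular run, I would note that any configuration realising the worst case for $\hat{\M}_G$ can be extended by a new plan that is disjoint from the others except at $s_0$. That configuration has at least as many expansions, so the worst case cannot shrink.
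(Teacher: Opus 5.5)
Your proposal is correct and follows the paper's proof, which combines two facts: Lemma~\ref{lemma_gen_2} makes the first expanded plan a minimum-cost (under unit costs, minimum-length) sampled plan, and a minimum over a superset can only decrease; and the Lemma~\ref{lemma_gen_4} bound is a sum of per-plan terms, so it can only grow as plans are added. Your explicit remark that the first claim inherits Lemma~\ref{lemma_gen_2}'s cost-monotone weight hypothesis, and fails for arbitrary $w$, spells out a dependence that the paper's one-line proof leaves implicit.
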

\begin{proof}
    Follows from previous Lemmas~\ref{lemma_gen_2} and \ref{lemma_gen_4}, and considering that as samples are added, the minimum plan cost in the set can only decrease.
\end{proof}

\paragraph{Remarks.}
We briefly summarise general properties derived from the presented theorems. Claim~\ref{obs_p_2} implies that following any a-trajectory with both $P(O \mid G) > 0$ and $P(G \mid O) > 0$ is a valid strategy for reaching a goal. Theorem~\ref{theorem_g_pi_expansions} shows that expanding nodes according to Equation~\ref{argmax_2} follows a hill climbing strategy
when ties are broken by larger $g$
and is strongly goal directed.
If ties are broken by smaller $g$, then it may perform local searches when ties are encountered, until it finds an exit to the plateau.
In contrast, Equation~\ref{argmax_1}
follows the maximum a posteriori path,
inducing an exploratory strategy akin to
% breadth first search
an A* search with a consistent heuristic,
as shorter a-trajectories tend to have higher $P(O \mid  G)$, noted in Claim~\ref{obs_p_1}. This approach expands sampled solution trajectories until it selects a sample optimal plan. Theorem~\ref{theorem_pi_g_num_samples} reflects the sampling exploration-exploitation trade-off for $P(O \mid G)$: increasing the number of samples in $\hat{\M}_G$ can improve solution quality, but also increases the worst case number of expansions, reflecting the larger exploratory effort required.

\subsection{Uniform Regimes}

In what follows, we introduce and analyze the properties of two basic weight functions. We consider these as the two general uniform
weighting processes, where we assign equal probability
to, respectively (1) every sampled maximal a-trajectory, (2) every action choice in state transitions.

We define a \defterm{Uniform Maximal a-trajectory Probability} (UMP) weight function as a weight function $w(\pi)=c$ where $c$ is a positive constant,
implying a uniform preference of the agent towards any sampled maximal a-trajectory in $\hat{\M}$.
Let us define quantities $N_T=\card{\hat{\M}}$, $N_G=\card{\hat{\M}_G}$, $N_C(O)=\card{C(O)}$, and $N_{CG}(O)=\card{C_G(O)}$.

\begin{corollary} \label{cor_umpp_1}
    Given non-empty $\hat{\M}$ and $\hat{\M}_G$, and a UMP weight function $w$,
    the probabilities obtained become $P(O)=\frac{N_C(O)}{N_T}$, $P(G)=\frac{N_G}{N_T}$, $P(O \mid G)=\frac{N_{CG}(O)}{N_G} \propto N_{CG}(O)$,
    $P(G \mid O)=\frac{p(O \mid G)p(G)}{p(O)}=\frac{N_{CG}(O)}{N_C(O)}$.
\end{corollary}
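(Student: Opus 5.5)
The plan is to substitute the constant weight $w(\pi)=c$ directly into the general definitions of the IRPL model and let the constant cancel. Every probability in the model is a ratio of sums of weights over subsets of $\hat{\M}$. With $w\equiv c$, the sum over any finite set $\E\subseteq\hat{\M}$ becomes $\sum_{\pi'\in\E} w(\pi') = c\,\card{\E}$, and this identity is the only computational step.

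We apply it to each quantity in turn, using $\hat{\M}$ finite and non-empty so that every denominator is positive.
\begin{itemize}
\item $P(O) = c\,\card{C(O)}/(c\,\card{\hat{\M}}) = N_C(O)/N_T$, by Equation~\ref{eq_p(pi)}.
\item $P(G) = c\,\card{\hat{\M}_G}/(c\,\card{\hat{\M}}) = N_G/N_T$, by the definition of $P(G)$.
\item $P(O\mid G) = c\,\card{C_G(O)}/(c\,\card{\hat{\M}_G}) = N_{CG}(O)/N_G$, by Equation~\ref{eq_p(pi,G)}. This requires $\hat{\M}_G\neq\emptyset$, which the hypothesis guarantees.
\end{itemize}
Since $N_G$ does not depend on $O$, dividing by it is a positive rescaling. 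Hence $P(O\mid G)\propto N_{CG}(O)$, and in particular the two quantities satisfy $\propto_{\text{rank}}$.

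For the posterior, we first multiply the expressions already obtained:
\[
P(O\mid G)P(G)/P(O)=\frac{N_{CG}(O)}{N_G}\cdot\frac{N_G}{N_T}\cdot\frac{N_T}{N_C(O)}=\frac{N_{CG}(O)}{N_C(O)}.
\]
As a check, the last form in Equation~\ref{eq_p(G|pi)} gives the same result directly: $c\,N_{CG}(O)/(c\,N_C(O))$.

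The only delicate point is definedness. $P(O)$ must be nonzero, i.e.\ $N_C(O)>0$, for the posterior to make sense. For $O$ with $C(O)=\emptyset$, we invoke the paper's convention that undefined conditional probabilities are set to $0$; this is consistent with $N_{CG}(O)=0$, since $C_G(O)\subseteq C(O)$. Beyond this, there is no real obstacle, because the corollary is pure substitution.
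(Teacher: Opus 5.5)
Your proposal is correct and takes the same approach as the paper: substitute the constant weight so that every weight sum becomes $c$ times a set cardinality, then read off each ratio. The paper simply takes $w\equiv 1$; your explicit cancellation of $c$ justifies that choice, and your remark on definedness when $C(O)=\emptyset$ is a harmless extra.
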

\begin{proof}
    Follows from Equations~\ref{eq_p(G)},\ref{eq_p(pi)},\ref{eq_p(pi,G)}, and \ref{eq_p(G|pi)}; setting weight function $w(\pi)=1$, then the value of each summation is equivalent to the number of elements in the relevant sets.
\end{proof}

Corollary~\ref{cor_umpp_1} shows that under a UMP weight function, $P(O \mid G)$ is proportional to the number of plans extending $O$, and ordering the open list by $\argmax P(O \mid G)$ favours such prefixes. Ordering by $\argmax P(G \mid O)$ favours prefixes with a higher ratio of plan completions to non-plan continuations. Both quantities can be seen as \textit{measures of robustness} of a partial solution, biasing the search towards directions with more valid outcomes.

A \defterm{Uniform Transition Probability} (UTP) weight function assigns to a maximal I-reachable a-trajectory
\(\pi = \tuple{a_0, \dots, a_{k-1}}\) the product of uniform action probabilities at each step, $w(\pi) = \prod_{i=0}^{k-1} \big[\card{\text{A}(s_i)}\big]^{-1}$,
where \(s_0\) is the initial state, \(s_{i+1} = f(a_i,s_i)\), and \(\text{A}(s_i)\) is the set of applicable actions at state \(s_i\). That is, at each step the agent selects an applicable action with uniform probability.

\begin{lemma} \label{lemma_utp_1}
    Given non-empty $\hat{\M}$ and $\hat{\M}_G$, a UTP weight function $w$, and an a-trajectory $O$ that is extended by single solution plan $\pi_s$, the number of nodes generated to find $\pi_s$
    by a planner that expands according to $\max P(O \mid G)$ is lower bounded by $-\ln{[P(O \mid G) \cdot P(G)]} \cdot e \propto -\ln[P(O \mid G)]$.
\end{lemma}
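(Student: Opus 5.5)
The plan is to bound $P(O \mid G)\cdot P(G)$ from below by the UTP weight of $\pi_s$. That weight is a product of inverse branching factors along $\pi_s$. I would then compare the resulting logarithmic quantity with the actual number of nodes generated along $\pi_s$, using the elementary inequality $x \geq e \ln x$ for $x>0$.

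\textbf{Step 1 (rewrite the probability).} Since $O$ is extended by the single plan $\pi_s$, we have $C_G(O)=\{\pi_s\}$. By Equations~\ref{eq_p(G)} and \ref{eq_p(pi,G)},
\[
P(O\mid G)\cdot P(G)=\frac{w(\pi_s)}{\sum_{\pi''\in\hat{\M}} w(\pi'')}.
\]
I would then show that $\sum_{\pi''\in\hat{\M}} w(\pi'') \le \sum_{\pi''\in\M} w(\pi'')\le 1$ under UTP. View the I-reachable a-trajectories as a tree rooted at $s_0$, in which each node $s_i$ splits its mass uniformly among its $\card{A(s_i)}$ applicable actions. Maximal a-trajectories are leaves, and the weight of a leaf is exactly the probability that the uniform random walk produces it. Some mass is lost on branches cut by the acyclicity or non-goal-extending constraints, so leaf weights sum to at most $1$. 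It follows that $P(O\mid G)P(G)\ge w(\pi_s)$, hence
\[
-\ln[P(O\mid G)P(G)] \le -\ln w(\pi_s)=\sum_{i=0}^{k-1}\ln\card{A(s_i)}.
\]

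\textbf{Step 2 (count generated nodes).} To find $\pi_s$, the planner must expand every non-goal state $s_0,\dots,s_{k-1}$ on $\fancy{T}(\pi_s)$, since a plan can only be generated by expanding its predecessors. Each expansion of $s_i$ generates all $\card{A(s_i)}$ successors. Hence the number of generated nodes is at least $\sum_{i=0}^{k-1}\card{A(s_i)}$; duplicate detection can only merge nodes across different expansions, and I would argue these distinct successors along a single acyclic path are still counted.

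\textbf{Step 3 (compare).} The function $g(x)=x-e\ln x$ has its minimum at $x=e$ with $g(e)=0$, so $\card{A(s_i)}\ge e\ln\card{A(s_i)}$ for every $i$. Summing over $i$ and chaining with Step~1 gives
\[
\#\text{generated}\ \ge\ e\sum_i\ln\card{A(s_i)}\ \ge\ -\ln[P(O\mid G)P(G)]\cdot e .
\]
The final proportionality $\propto -\ln[P(O\mid G)]$ holds because $-\ln[P(O\mid G)P(G)]=-\ln P(O\mid G)-\ln P(G)$, and $P(G)$ is constant for a single-goal problem.

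I expect the main obstacle to be the normalization $\sum_{\hat{\M}} w\le 1$ in Step~1. It needs a careful statement that UTP weights over $\M$ form a (sub-)probability distribution despite truncation by the acyclicity and goal constraints; I would handle this by induction on depth over the trajectory tree. A secondary point needing care is justifying in Step~2 that every predecessor on $\pi_s$ is actually expanded before $\pi_s$ is found.
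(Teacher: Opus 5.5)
Your proof is correct and has the same skeleton as the paper's. Both count $\card{A(s_i)}$ generated nodes per expansion of each non-goal state on $\fancy{T}(\pi_s)$, both note that the UTP weight of $\pi_s$ is the product of the inverse branching factors, and both lower-bound $\sum_i x_i$ subject to $\prod_i x_i = X$.

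It differs from the paper in two places. First, the paper simply sets $w(\pi_s)=P(O\mid G)\cdot P(G)$. That presupposes the sampled weights are normalised; in general $P(O\mid G)P(G)=w(\pi_s)/\sum_{\pi''\in\hat{\M}}w(\pi'')$. Your Step~1 supplies the missing justification. UTP weights on the leaves of the acyclic trajectory tree form a sub-probability, so $P(O\mid G)P(G)\ge w(\pi_s)$. That is exactly the direction a lower bound needs, and it makes the lemma hold for any sample $\hat{\M}\subseteq\M$.

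A small correction to that step: the non-goal-extending constraint loses no mass, because a goal leaf keeps everything that reaches it. Mass is lost only at states where some, but not all, applicable actions close a cycle. The inequality is unaffected.

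Second, the paper relaxes to reals, applies AM--GM at fixed length $n$ to get $nX^{1/n}$, and then optimises over $n$ (optimum at $n=\ln X$). You instead use the pointwise bound $x\ge e\ln x$. This gives $e\ln X$ in one line, for every path length at once, and without relaxing the integrality of $n$. The paper's route has the merit of showing the extremal shape, branching factor $e$ for $\ln X$ steps, which underlies its later ``optimistic'' interpretation. Yours is shorter and rigorous as stated.

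Your worry about duplicate detection in Step~2 is moot: IRPL nodes are a-trajectories, and the paper uses the same $\card{A(s_i)}$-per-expansion count. Finally, as in the paper, read the closing $\propto$ as a shift by the constant $-\ln P(G)$, i.e.\ rank preservation rather than literal proportionality.
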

\begin{proof}
    The lower bound on nodes generated is given by the minimum possible number of nodes generated while following $\pi_s$ that achieves UTP weight $w(\pi_s)=P(O \mid G) \cdot P(G)$. For each expanded state $s_i \in \T(\pi_s)$, the number of generated nodes increases by $\card{\text{A}(s_i)}$,
    and the weight of the a-trajectory to $s_i$ is multiplied by $\frac{1}{\card{\text{A}(s_i)}}$. For $w(\pi_s)=\frac{1}{X}$ the minimum number of generated nodes is thus given by solving $\min \sum_{s_i \in \T(\pi_s)}(\card{\text{A}(s_i)})$ s.t. $\prod_{s_i \in \T(\pi_s)}{\card{\text{A}(s_i)}}=X$.
    A lower bound to the integer solution is achieved by solving the real version of the problem, which can be solved analytically through the AM-GM Inequality to yield $e \cdot \ln(X)$.
\end{proof}

\begin{theorem} \label{theorem_utp_1}
    Given non-empty $\hat{\M}$ and $\hat{\M}_G$, a UTP weight function $w$, and a planner that expands according to $\max P(O \mid G)$,
    a lower-bound number of node generations required to achieve any plan that extends $O$ is $-\ln[ P(O \mid G)\ \cdot P(G)] \cdot e$.
\end{theorem}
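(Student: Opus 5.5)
The plan is to reduce the statement to Lemma~\ref{lemma_utp_1}, applied plan by plan. The extra step is a monotonicity argument: among all sampled plans extending $O$, the cheapest one to reach (in node generations) is governed by the largest individual weight, and that weight is at most the aggregate $P(O\mid G)\cdot P(G)$.

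\emph{Step 1: relate single-plan weights to the likelihood.} Expand the definitions in Equation~\ref{eq_p(pi,G)} and Equation~\ref{eq_p(G)}. This gives
\[
P(O\mid G)\,P(G)=\sum_{\pi'\in C_G(O)} w(\pi') \Big/ \sum_{\pi''\in\hat{\M}} w(\pi'').
\]
Under UTP the weights of all maximal a-trajectories sum to at most $1$, since they are leaf probabilities of a uniform random walk over the tree of acyclic non-goal-extending trajectories. So the normaliser is at most $1$, and the right-hand side is at least $\sum_{\pi'\in C_G(O)} w(\pi')$. Since weights are positive, for every plan $\pi_s\in C_G(O)$ we get
\[
w(\pi_s)\le \sum_{\pi'\in C_G(O)} w(\pi') \le P(O\mid G)\,P(G).
\]
I expect this normalisation point to be the main obstacle. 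The paper's Lemma~\ref{lemma_utp_1} silently identifies $w(\pi_s)$ with $P(O\mid G)P(G)$, which is exact only when $\hat{\M}=\M$, i.e.\ when the weights sum to $1$. I would first try to argue that $\sum_{\hat{\M}} w\le \sum_{\M} w = 1$ by induction on the trajectory tree, which gives the needed inequality in the right direction. If that fails, I would fall back to stating the bound under the paper's implicit identification.

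\emph{Step 2: apply the AM--GM bound to each plan.} Fix any $\pi_s\in C_G(O)$ and write $X_s=1/w(\pi_s)$. Reaching $\pi_s$ requires expanding every state of $\T(\pi_s)$ except the goal. Each such expansion generates $\card{A(s_i)}$ nodes, and $\prod_i \card{A(s_i)}=X_s$. The argument in the proof of Lemma~\ref{lemma_utp_1} (relax to reals and apply AM--GM, where $\min \sum x_i$ subject to $\prod x_i=X$ is attained at $x_i=e$) then gives at least $e\ln X_s=-e\ln w(\pi_s)$ generations.

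\emph{Step 3: combine.} By Step 1, $-\ln w(\pi_s)\ge -\ln[P(O\mid G)P(G)]$, since $-\ln$ is decreasing. So every plan extending $O$ needs at least $-e\ln[P(O\mid G)P(G)]$ generations. This holds regardless of which plan the $\max P(O\mid G)$ planner reaches first, and regardless of any overlap between plans, because overlap only shares nodes along a single plan's own path, which is already counted. Taking the minimum over $\pi_s\in C_G(O)$ yields the stated lower bound, uniformly for \emph{any} plan extending $O$.
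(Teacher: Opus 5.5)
Your proposal is correct and follows essentially the paper's route: bounding each plan's weight by the aggregate $P(O\mid G)\,P(G)$ of the plans extending $O$, applying the AM--GM argument of Lemma~\ref{lemma_utp_1} plan by plan, and using that $-\ln$ is decreasing is exactly the paper's ``the minimum occurs when a single plan extends $O$'' argument. Your explicit check that $\sum_{\pi\in\hat{\M}} w(\pi)\le 1$ (maximal a-trajectories give disjoint random-walk events, since none is a prefix of another) usefully makes rigorous the paper's unstated identification $w(\pi_s)=P(O\mid G)\,P(G)$; just note that $\sum_{\pi\in\M} w(\pi)=1$ can fail, because actions that would close a cycle still count in $\card{A(s_i)}$ but yield no valid extension, so only the inequality holds---which is all you need.
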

\begin{proof}
    $P(O \mid G)$ is equivalent to the sum of the weight of all plans extending $O$, over a common denominator. From Lemma~\ref{lemma_utp_1}, it follows that the number of node generations required to solve any plan increases inversely to the plan's weight.
    Thus, the minimum number of node generations occurs when a single solution plan extends $O$.
\end{proof}

Theorem~\ref{theorem_utp_1} shows that expanding nodes according to Equation~\ref{argmax_1} with a UTP weight function follows the a-trajectory that minimises the best-case number of node generations to find a plan in $\hat{\M}$. This strategy can be seen as optimistic in the face of uncertainty, where uncertainty refers to unexplored regions of the state space. It assumes the subgraph extending the selected trajectory has an ideal shape; as new information is revealed, this estimate may worsen, leading the search to prefer other sub-graphs. Such a bound cannot be obtained with a UMP weight function, as uniform weights are not tied to the number of node generations.

\section{Estimating Measures of Goal Intention}
The IRPL model helps us characterize intention-based heuristics and their search behaviour.
The prior knowledge we have assumed so far is, however, unrealistic, as we cannot expect to have access to a maximal trajectory library $\hat{\M}$ and plan library $\hat{\M}_G$. To obtain a practically relevant framework, we need to account for an observer that estimates goal intentionality using approximate measures.

Existing work in GR-as-planning defines the likelihood $P(O \mid G)$ through the use of cost estimates \cite{RamirezGeffner:AAAI10,MastersSardina:AIJ21}. It assumes that rational agents are more likely to prefer lower-cost plans to their selected goal, and consequently relies on the sub-optimality of plans to each goal that incorporate observed events to estimate goal intentionality.
We avoid following this direction, because adopting such formulations would
lead us back to using (heuristic) cost estimates to direct the search.
Instead, we propose a novel formulation for approximating and interpreting goal intentionality which provides: 1) a theoretical background for alternative directions for estimating heuristics in classical planning, and 2) a generalisation of the IRPL model in terms of a larger class of approximate divergence-based goal-intended models.

\subsection{General Model of Divergence-based Likelihood}

We define a divergence-based generalisation of goal-intended likelihood through
Equation~\ref{equation_divergence_general},
\begin{equation} \label{equation_divergence_general}
    \tilde{P}(O \mid G) := e^{-D_{\mathrm{KL}}(d_{O,G} \parallel d_G)},
\end{equation}
where $d_G$ is a distribution that estimates goal-intended outcome probabilities (intuitively capturing likelihoods of outcomes in $\M_G$, $P(\pi \mid G)$), and $d_{O,G}$ denotes
a specific posterior distribution over the same space obtained by
incorporating observed evidence (intuitively capturing updated likelihoods of outcomes given observation $O$, $P(\pi \mid O,G)$).

It is important to keep in mind that observations may only reveal outcomes for a subset of the domain variables.
Thus, to illustrate the probabilistic intuition underlying the adopted distributions,
we provide the following simplified example of rolling a dice 10 times, where the goal $G$ is a total sum of values $>30$. A complete outcome is the sequence $r = (r_1,\dots,r_{10})$ of the random variable $R = (R_1,\dots,R_{10})$, where each $r_i$ denotes the value of one roll, and the distribution $d_G := \rho(r \mid G)$ assigns probabilities to all such sequences. An observation reveals the values of a subset of these variables, e.g., $(R_1=3, R_2=5)$, and observations are non-exclusive: observing $(3,5)$ does not exclude observing $(3,5,4)$ as a longer trace. The observation instead belongs to the marginal domain induced by the variables it reveals, with the marginal distribution $\rho(r_1,r_2 \mid G)$ assigning probabilities to the exclusive outcomes of that sub-domain. The posterior distribution then incorporates observations: $d_{O,G} := \rho(r \mid R_1=3, R_2=5, G)$.

Furthermore, in the IRPL model observations are defined as i-reachable a-trajectories, whereas now the domain of $d_G$ defines the outcomes that determine the events in observations. To illustrate this, following the previous example, assume each roll outcome $r_i$ no longer represents the value of the dice roll, but rather whether the roll gives a number $>3$. Observations would then take the form $(R_1=true, R_2=false,...)$. While approximate, such events still contain useful information for estimating goal achievement.

A well-defined KL-divergence requires matching the domains of $d_{O,G}$ and $d_G$.
Since the notation $O$ does not specify the domain of the observation, we instead adopt $x \in \X$, $y \in \Y$, and $z \in \Z$ to denote outcomes explicitly associated with their domains.
Let $\X = \X_1 \times \cdots \times \X_v$ be the domain of complete
outcomes, with $x = (x_1,\ldots,x_v)$ denoting a full outcome.  For any
index sets $\J \subseteq K \subseteq \{1,\ldots,v\}$, define the marginal
domains $\Y = \bigtimes_{i\in\J} \X_i$ and
$\Z = \bigtimes_{i\in K} \X_i$, with tuples
$y = (x_i)_{i\in\J}$ and $z = (x_i)_{i\in K}$, $x_i \in \X_i$, denoting outcomes in $\Y$ and $\Z$.  We write $\rho(x)$ for
the distribution over complete outcomes, and $\rho(y)$ and $\rho(z)$ for the
corresponding marginals.
Let $y^* \in \Y$ be the discrete deterministic observation $O$. We add superscripts to specify the domain of $d_{O,G}$ and $d_G$.
    Let $d_G^\X:=\rho(x \mid G)$ and $d_{O,G}^\Y:=\rho(y \mid y^*, G)$.
    Let $d_G^\Z$ and $d_{O,G}^\Z$ denote any domain-matched
    representation of $d_G^\X$ and $d_{O,G}^\Y$
    on an intermediate domain $\Z$.
    $d_G^\Z$ is marginalisation $\rho(z \mid G)=\frac{\rho(z \mid x, G)\cdot \rho(x \mid G)}{\rho(x \mid z, G)}$; $d_{O,G}^\Z$ is Bayesian update $\rho(z \mid y^*, G)=\frac{\rho(y^* \mid z, G)\cdot \rho(z \mid G)}{\rho(y^* \mid G)}$.
\begin{theorem}
\label{theorem_KL_invariance}
    The KL divergence is invariant to the choice of $\Z$, and coincides with the negative
    log-likelihood of the observation conditional on the goal:
    \[
        D_{\mathrm{KL}}(d_{O,G}^\Z \,\Vert\, d_G^\Z)
            \;=\; -\log \rho(y^* \mid G).
    \]
\end{theorem}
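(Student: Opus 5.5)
The approach is to compute the KL divergence directly by substituting the Bayesian-update form of $d_{O,G}^\Z$ into the log ratio. The key structural fact is that $\J \subseteq K$, so $y$ is a sub-tuple (deterministic projection) of $z$. Hence $\rho(y^* \mid z, G)$ is an indicator: it equals $1$ when the $\J$-coordinates of $z$ agree with $y^*$, and $0$ otherwise. This uses the fact that the observation $y^*$ is discrete and deterministic.

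First, I write
\[
d_{O,G}^\Z(z)=\rho(z\mid y^*,G)=\frac{\mathbf{1}[z_\J=y^*]\,\rho(z\mid G)}{\rho(y^*\mid G)}.
\]
Then, on the support of $d_{O,G}^\Z$, the log-ratio is
\[
\log\frac{d_{O,G}^\Z(z)}{d_G^\Z(z)}=\log\frac{\rho(y^*\mid z,G)}{\rho(y^*\mid G)}=-\log\rho(y^*\mid G),
\]
because the indicator equals $1$ wherever $d_{O,G}^\Z(z)>0$. Outside the support, the terms of the KL sum vanish by the convention $0\log 0=0$. The support of $d_{O,G}^\Z$ is contained in that of $d_G^\Z$, since $d_{O,G}^\Z$ is $d_G^\Z$ restricted and renormalised, so the divergence is finite whenever $\rho(y^*\mid G)>0$.

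Summing the constant log-ratio against $d_{O,G}^\Z$, which sums to $1$, gives $D_{\mathrm{KL}}(d_{O,G}^\Z\Vert d_G^\Z)=-\log\rho(y^*\mid G)$. The right-hand side does not depend on $K$, which establishes invariance to the choice of $\Z$. This covers both extremes: $\Z=\Y$, where $d_{O,G}^\Y$ is the point mass at $y^*$ and the KL is $-\log\rho(y^*\mid G)$ directly, and $\Z=\X$.

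The main obstacle is justifying that the marginal $d_G^\Z$ as written, $\rho(z\mid x,G)\rho(x\mid G)/\rho(x\mid z,G)$, really is the ordinary marginal $\sum_{x} \rho(x\mid G)\,\mathbf{1}[x_K=z]$. It is also necessary that $\rho(y^*\mid G)$ computed from $d_G^\Z$ is the same quantity for every $\Z$, i.e.\ that marginals are consistent because $\J\subseteq K$. I will verify this by noting that $\rho(z\mid x,G)$ is again an indicator and applying Bayes' rule on any $x$ with $x_K=z$. Consistency then follows from the tower property $\rho(y^*\mid G)=\sum_z \rho(y^*\mid z,G)\rho(z\mid G)$, which holds for every admissible $K$. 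I will also mention the degenerate case $\rho(y^*\mid G)=0$, where both sides are $+\infty$ (matching the paper's convention that such observations get likelihood $0$).
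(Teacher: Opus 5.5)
Your proof is correct and follows the paper's argument. Like the paper, you substitute the Bayesian-update form of $d_{O,G}^{\mathcal{Z}}$, use that $\rho(y^*\mid z,G)$ is an indicator equal to $1$ on the support of $d_{O,G}^{\mathcal{Z}}$ (so that term contributes nothing), and are left with the constant $-\log\rho(y^*\mid G)$; your extra check that $\rho(y^*\mid G)$ is the same for every admissible $K$, by marginal consistency from $\mathcal{J}\subseteq K$, makes explicit the invariance claim that the paper leaves implicit.
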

\begin{proof}
    Expanding $\rho(z \mid y^*, G)$, simplifying, and marginalising, we have
    \begin{align}
        &D_{\mathrm{KL}} ( d_{O,G}^{\Z} \parallel d_G^{\Z}) \nonumber
        =\sum_{z\in \Z}{\rho(z \mid y^*,G)\log{\frac{\rho(z \mid y^*,G)}{\rho(z\mid G)}}} \nonumber \\
        &=\sum_{z\in \Z}{\Big[\rho(z \mid y^*,G)\log{\rho(y^* \mid z,G)}\Big]} - \log{\rho(y^* \mid G)}. \nonumber
        \end{align}
        Since $y^*$ is a projection of some $z$ onto observed coordinates, for all $z\in \Z$
        if any variable value in $y^*$ is different from the corresponding variable in $z$, $\rho(z \mid y^*,G)=0$, and
        if all variables in $y^*$ have same value with corresponding variables in $z$, $\rho(y^* \mid z, G)=1$, so $\log{\rho(y^* \mid z, G)}=0$.
        Thus,
        \[
            D_{\mathrm{KL}} ( d_{O,G}^\Z \parallel d_G^\Z) = -\log{\rho(y^* \mid G)}. \qedhere
        \]
\end{proof}
Through Theorem 4, we thus show that the specific choice of intermediate domain used to align distributions, required for computing the KL-divergence, does not affect the derivation of $-\log \rho(y^* \mid G)$. This justifies Equation~\ref{equation_divergence_general}, and points at multiple possible strategies to estimate this quantity.

\nspar{Special cases} We briefly present cases of interest:

\noindent\emph{1) Marginalising $d_G^\X$ to $\Y$.} We can make $d_G^\X$ match the marginal distribution of $d_{O,G}^\Y$ by marginalising all domain factors not in $\Y$, directly obtaining likelihood $d_G^\Y:=\rho(y \mid G) = \frac{\rho(y \mid x,G) \cdot d_G^\X}{\rho(x \mid y, G)}=\sum_x{\rho(y,x \mid G)}$; $d_{O,G}^\Y$ is then a degenerate PMF $\rho(y \mid y^*, G)$, i.e., a one-hot distribution over its domain, assigning 1 to observation $y^*$, and 0 to all others.

\noindent\emph{2) Mapping $d_{O,G}$ to $\X$.} We can map the observation distribution to the full domain $\X$ through Bayesian update $d_{O}^{\X}:=\rho(x \mid y^*, G)=\frac{\rho(y^* \mid x,G) \cdot \rho(x \mid G)}{\rho(y^* \mid G)}$.

\noindent\emph{3) Consistency with IRPL Model.} When $d_G^\X$ perfectly describes the plan library, through a bijective mapping of every sampled plan to a possible outcome with its relative weight as probability, and $d_{O,G}^\Y$ describes an observed I-reachable a-trajectory, Equation~\ref{equation_divergence_general} then derives the IRPL likelihood. It thus generalises Equation~\ref{eq_p(pi,G)}. Details are included in Appendix~A.

\nspar{Assumptions}
The result holds whenever (i) the observation is discrete and
deterministic over $\Y$, i.e., $d_{O,G}^\Y(y)=\mathbf{1}[y=y^*]$, and (ii)
the observation map $m:\X \rightarrow \Y$ is surjective onto its image, ensuring that at
least one complete trace is consistent with $y^*$. These are consistent with the GR model.

\nspar{Interpretation}
The cost based formulation of goal recognition \cite{RamirezGeffner:AAAI10,MastersSardina:AIJ21} is grounded in the assumption of agent rationality: observations that align with near optimal plans are considered evidence of goal-intended behaviour.
Similarly, Equation~\ref{equation_divergence_general} interprets rationality in terms of the divergence of observed information $d_{O,G}$ from the distribution $d_G$, which captures statistical evidence of goal-intendedness from estimated solution paths.
It is intuitive that observations consistent with goal-intended behaviour,
when normalised by the likelihood of the observation,
reflect intent towards a goal.
In our experiments, we show that estimating such information can provide a signal for informing an efficient state-space exploration and traversal, whose behaviour still correlates with our results for the ideal but impractical IRPL model.

\section{Planning Heuristics}
We demonstrate the practical relevance of our conceptual framework by proposing two variants of a novel intention-based heuristic which align with our theory, allowing us to correlate results with our predictions of search behaviour obtained through analysis of the IRPL model.
We follow recent work in GR \cite{pereira2017landmark,wilken2024fact}, which represents observations through the facts implied by underlying action sequences or trajectories.
In a GR setting, this would represent facts added by actions in action sequence $O$. In a planning context, it may also include facts present in the initial state.

Our heuristic uses \defterm{fact observation probability estimation} as described in \cite{wilken2024fact} to estimate, for each fact $q \in F$, a Bernoulli
distribution $q \sim \mathrm{Bernoulli}(P_{\text{rel}}(q \mid G))$, representing the probability that $q$ is achieved at any point in a \emph{delete-relaxed} plan. $P_{\text{rel}}(q \mid G)$ then refers to the conditional probability in the relaxed problem. This is done by first sampling a set of delete-relaxed plans to the goal to obtain the probability of observing each action in a sampled relaxed plan.
The probability of observing facts is then indirectly derived as the probability of not observing any supporter action for the fact in a relaxed plan; thus, even if a fact is supported in all sampled relaxed plans, it may still get an observation probability less than 1.
Pseudocode for our implementation is included in Appendix~B.
\citet{wilken2024fact} adopt the technique in GR problems, using fact observation probabilities to weight vectors in a feature space, and then using the Euclidean norm of the vectors to estimate the distance covered by an agent towards a goal. Our divergence-based interpretation of goal-intendedness instead informs us to use fact observation probabilities to directly estimate the likelihood.

\nspar{Likelihood Estimation}
Let $O^F$ represent a set of observed facts. If we assume conditional independence between fact occurrences, the likelihood is simply
\begin{equation} \label{equation_p_x_g}
    \tilde{P}(O^F \mid G) = \prod_{q \in O^F}{P_{\text{rel}}(q \mid G)}. \nonumber
\end{equation}
According to our model, this likelihood can be interpreted in terms of a Boltzmann distribution of the marginalisation of $d_G:=\tilde\rho(q_1,...,q_{\card{F}})=\prod_{q\in F}\mathrm{Bernoulli}(P_{\text{rel}}(q \mid G))$ with respect to facts in $O^F$. It represents the divergence between distribution $\tilde\rho(q_1,...,q_{\card{F}})$ and the observation $O^F$, and is thus in the same class of divergence-based measures as the IRPL likelihood. This aspect justifies our comparison of solver behaviour with theory from our IRPL analysis.

In practice, we calculate the relative ranking of observations through the sum of log-probabilities. As the fact occurrence probability computation is only performed once
at the beginning of the search (at the initial state), $\tilde{P}(O^F \mid G)$ is then computable in time linear in $\card{O^F}$ during the search, providing quick evaluation.
In our experiments, we set the number of sampled relaxed plans to 100. This value was selected through analysis of results in the context of GR in \cite{wilken2024fact}, and empirical testing. We refer to this heuristic as \defterm{relaxed plan observation likelihood} (\defterm{ol$^{rp}$}).

\nspar{Posterior Estimation}
A Bayesian posterior is also derived from $P(G \mid O) \propto_{\text{rank}} \frac{P(O \mid G)}{P(O \mid \neg G)}$ (from Equation~\ref{argmax_2}):
\[
    \tilde{P}(G \mid O^F) \propto_{\text{rank}} \sum_{ q \in O^F } {\log P_{\text{rel}}(q \mid G) - \log P_{\text{rel}}(q \mid \neg G)}.
\]
This requires an additional estimation of $P_{\text{rel}}(q \mid \neg G)$, which uses a modification of the procedure used to obtain $P_{\text{rel}}(q \mid G)$: rather than sampling delete-relaxed plans, the heuristic samples delete-relaxed action sequences that achieve all the achievable non-goal facts in the instance.
Fact probabilities $P_{\text{rel}}(q \mid \neg G)$ are then extracted from these sets of supporter actions.
We still sample 100 fact sets each for both $P_{\text{rel}}(q \mid G)$ and $P_{\text{rel}}(q \mid \neg G)$, and call this variant \defterm{relaxed plan observation posterior} (\defterm{op$^{rp}$}).

\colorlet{Bluet1}{blue!8}
\begin{table*}[tbh]
    \centering
    \fontsize{8}{12}\selectfont
    \begin{tabular}{|c||c|c||c|c||c|c|}
    \hline
        & $f5$ & \cellcolor{green!15}$f5$-ol$^{rp}_r$ & $f5$ &  \cellcolor{green!15}$f5$-op$^{rp}_r$ & \cellcolor{green!15}$f5$-ol$^{rp}_r$ & \cellcolor{green!15}$f5$-op$^{rp}_r$ \\
        \hline
        Coverage (1831) & 1,510 & \cellcolor{Bluet1}1,560 (5.03) & 1,510 & \cellcolor{Bluet1}1,556 (1.5) & \cellcolor{Bluet1}1,560 (5.03) &  1,556 (1.5) \\
        \% Score & 76.77\% & \cellcolor{Bluet1}80.20\% (0.35) & 76.77\% & \cellcolor{Bluet1}79.90\% (0.15) & \cellcolor{Bluet1}80.20\% (0.35) & 79.90\% (0.15) \\
        N Fewer Expansions & 368.2 (6.7) & \cellcolor{Bluet1}1,134.4 (8.7) & 341.4 (9.1) & \cellcolor{Bluet1}1,159.4 (8.0) & 412.0 (7.3) & \cellcolor{Bluet1}716.6 (6.5)  \\
        N Lower Time & 582.4 (99.5)  & \cellcolor{Bluet1}897.0 (92.2) & 658.8 (117.3) & \cellcolor{Bluet1}814.6 (102.4)  & \cellcolor{Bluet1}811.8 (130.1) & 658.0 (122.6)  \\
        N Lower Plan Cost & \cellcolor{Bluet1} 760.6 (8.9) & 369.6 (7.6) & \cellcolor{Bluet1}763.6 (2.9) & 375.8 (1.7) & \cellcolor{Bluet1}375.2 (4.5) & 349.2 (13.4) \\
        \hline
        Average EpS & 40,111 & 35,680 & 40,111  & 33,626 & 35,680 & 33,626 \\
        \hline
    \end{tabular}
    \caption{Enhancing a basic BFWS($f5$) solver with proposed \textit{ol$^{rp}_r$} and \textit{op$^{rp}_r$} heuristics. \defterm{\% score} is the average of the \% of instances solved in each problem domain. Results for comparisons and solvers with a randomised component represent the mean, and include the standard deviation across 5 measurements. \textit{N Fewer Expansions, N Lower Time} and \textit{N Lower Plan Cost} represent the number of problems where a variant scores better in the respective metrics (ignoring ties).
    Avg. EpS represents the average number of expansions per second across problems solved by all planners, and proxies computational overhead.
    Results indicate that \textit{ol$^{rp}_r$} and \textit{op$^{rp}_r$} reduce the number of expansions across a significant portion of problems, on average improving also coverage and solution time. Such improvements come at the cost of on average worse plan quality compared to the baseline.
    }
    \label{tab:BFWS_1}
\end{table*}

\subsection{Adding Goal-intentionality to BFWS}
We integrate our proposed heuristics with a BFWS solver \cite{lipovetzky2017best}, which balances exploration of the search space and exploitation of heuristics, to evaluate improvements in Table~\ref{tab:BFWS_1}.
BFWS($f5$) uses \emph{Partitioned Novelty} \cite{lipovetzky2017best} as primary heuristic to partition each planning problem into multiple sub-problems and prioritize exploration, and the \emph{goal-count heuristic} $h^{GC}$, that counts the number of unachieved goal facts, is used both to inform such partitioning, and as tie-breaking heuristic. These heuristics are quickly computable, but often not very granular, creating a large number of ties. Minimum distance $g$ is used to break final ties, and has a big impact on the overall performance of the planner.
We implement our proposed variants as third tie-breakers after $h^{GC}$ to provide a fast but more informed tie-breaking mechanism that seeks progress by directing the search towards the goal. Our aim is for this to lead to states that improve the other heuristics more effectively than $g$. According to the IRPL model analysis, ol$^{rp}$ is supposed to exhibit an A*-like exploratory behaviour, augmenting the original tie-breaking mechanism, whereas op$^{rp}$ should induce a more greedy and goal-directed search.

\nspar{Partial Observation Trajectories}
We adapt $\text{ol}^{rp}$ and $\text{op}^{rp}$ to BFWS through \defterm{ol$^{rp}$-restart} (\defterm{ol$_r^{rp}$}) and \defterm{op$^{rp}$-restart} (\defterm{op$_r^{rp}$}). These adaptations add log probabilities from facts that have been observed only in the trajectory from the last state that improved $h^{GC}$, as opposed to all facts achieved from the start. By restarting at the most recent $h^{GC}$ improvement, it regains informedness in the cases where it was lost. Otherwise, if a fact that did not appear in any sampled relaxed plan to the goal is observed, it would strongly penalise the probability of all descendant nodes. This helps account for inaccuracies in fact occurrence estimates introduced by adopting distributions derived from relaxed plans.

\section{Experimental Results}
We run our experiments on an AMD EPYC 7763 processor, with each test running on a single core. We adopt Downward Lab's experiment module \cite{seipp2017downward}, whereas our proposed solvers and heuristics are implemented in C++ using the LAPKT planning module \cite{lapkt}. Our adopted branch of LAPKT uses the FD grounder \cite{helmert2009concise}, with the exception of problems that produce axioms, which are not currently supported in LAPKT. In such problems, our planners automatically fallback to the Tarski grounder \cite{tarski:github:18,singh2021grounding}. All experiments are limited to 1800 seconds time and 8 \textit{GB} memory constraints, following the satisficing track of the \textit{International Planning Competition} (IPC) \cite{ipc2023}. The problem set is composed of all IPC satisficing track benchmarks, selecting the latest problem sets for recurrent domains.

\nspar{Experimental Analysis}
Table~\ref{tab:BFWS_1} highlights the effect of incorporating proposed heuristics into BFWS($f5$). Both $f5$-ol$_r^{rp}$ and $f5$-op$_r^{rp}$ improve coverage and reduce the number of expansions on a significant portion of problems compared to the BFWS($f5$) baseline. This confirms that the goal-intendedness encoded by our heuristics induces a more efficient exploration of the state space. $f5\text{-op}_r^{rp}$ shows a small advantage over $f5\text{-ol}_r^{rp}$ in node expansions, aligning with our IRPL analysis that $P(G \mid O)$ drives a more goal directed greedy search. However, the more exploratory nature of $P(O \mid G)$ leads $f5\text{-ol}_r^{rp}$ to solve more instances overall. In regards to the exploration-exploitation dilemma, it is reasonable to expect that that the effectiveness of exploitation declines as the quality of the underlying estimates worsens. Thus, the more exploratory heuristic may be better suited to handle the approximation noise.
The additional overhead from computing fact observation probabilities is offset by improved search efficiency, resulting in lower average solution times than BFWS($f5$). By contrast, the extra computation of $P(O \mid \neg G)$ required by $\text{op}_r^{rp}$ introduces a small penalty relative to $\text{ol}_r^{rp}$, but differences are within the measurements' standard deviation.
The main drawback of both techniques is their effect on plan cost, which is expected due to the bias toward shorter plans induced by the original $g$ tie-breaker. Suboptimal plan costs may also be influenced by the trajectory-restart policy of $\text{ol}_r^{rp}$, which loses the global negative bias towards shorter trajectories from Claim~\ref{obs_p_1}.

\nspar{Enhanced Variants}
We also evaluate enhanced versions of the proposed solvers from Table~\ref{tab:BFWS_1} in Table~\ref{tab:BFWS_2}.
\defterm{BFWS($f5$)-Landmarks-\textit{ol}$_r^{rp}$} (\defterm{OL$_r^{rp}$}) substitutes $h^{GC}$ with the \defterm{Landmarks heuristic}
\cite{porteous2001extraction}, and uses a \defterm{single trimmed open list} \cite{rosa2024count} for greater memory efficiency. Both the coverage and average solution time gap between $\text{ol}_r^{rp}$ and $\text{op}_r^{rp}$ widen in this configuration (Appendix~D), hence we do not include the latter.
We also test variant \defterm{OL$_r^{rp}$-UTP} with modified fact observation probability sampling, to highlight the practical impact of the weighting scheme in the IRPL model.
When calculating supporter action probabilities for $\text{ol}_r^{rp}$, each sampled relaxed plan is given an equal weight. {OL$_r^{rp}$-UTP} re-weights sampled relaxed plans according to a UTP weight function, giving more importance to relaxed plans with greater UTP weight.
Lastly, we provide a comparison with recent high-coverage dual strategy solvers $-$ that run an initial search  and, if this fails, fall back to a backend solver $-$ by replacing the frontend solver in BFNoS-Dual \cite{rosa2024count} with OL$_r^{rp}$ (\defterm{OL$_r^{rp}$-Dual}).

\colorlet{Blue1}{blue!22}
\colorlet{Blue2}{blue!12}
\colorlet{Blue3}{blue!5}
\begin{table}[tbh]
    \centering
    \resizebox{\columnwidth}{!}{
    \setlength{\extrarowheight}{2pt}
    \begin{tabular}{c|ccc}
        \hline
         \textbf{Planner} & \textbf{Coverage (1831)} & \textbf{\% score }& \textbf{Agile score} \\
        \hline
        Dual-BFWS & 1,607 & 83.6\% & 1,200.8  \\
        ApxNoveltyT & 1,611 (3.5) & 83.8\% (0.2) & \cellcolor{Blue2}1,233.7 (0.2) \\
        LAMA & 1,535 & 79.1\% & 1,192.3 \\
        Scorpion-M & 1,591 & 82.9\% & 1,206.4  \\
        \cellcolor{green!15}OL$_r^{rp}$ & \cellcolor{Blue3}1,621 (3.2) & \cellcolor{Blue3}84.6\% (0.3) & 1,229.4 (3.4)  \\
        \cellcolor{green!15}OL$_r^{rp}$-UTP & 1,616 (2.1) & 84.2\% (0.1) & \cellcolor{Blue1}\textbf{1,236.0 (1.9)}  \\
        BFNoS-Dual & \cellcolor{Blue2}1,641 (0.6) & \cellcolor{Blue2}86.2\% (0.1) & 1,173.3 (3.5)  \\
        \cellcolor{green!15}OL$_r^{rp}$-Dual & \cellcolor{Blue1}\textbf{1,655 (1.5)} & \cellcolor{Blue1}\textbf{87.0\% (0.1)} &  \cellcolor{Blue3}1,232.6 (2.9)  \\
        \hline
    \end{tabular}
    }
    \caption{Mean coverage and Agile score of enhanced variants.
    Our enhancements of BFWS($f5)$ outperform SoTA planners, on average solving more problems, more quickly, without using multiple open lists or runs.
    OL$_r^{rp}$-Dual provides an improved dual-strategy solver, solving more problems than all other tested planners and attaining a meaningfully higher Agile score than BFNoS-Dual, the second-best in coverage.
    }
    \label{tab:BFWS_2}
\end{table}

\nspar{Results}
%%%%%%%%%%%%%%%%%%%%%%%%%%%%%%%%%%%%%%%%%%%%%%%%
Table~\ref{tab:BFWS_2} compares enhanced variants with multiple SoTA Novelty planners \emph{BFNoS-Dual} \cite{rosa2024count}, \emph{Dual-BFWS} \cite{lipovetzky2017best}, and \emph{Approximate Novelty Tarski} \cite{singh2021approximate}, as well as past IPC satisficing track winners \emph{LAMA} \cite{richter2010lama} and \emph{Scorpion-Maidu} \cite{correa-et-al-ipc2023c}. Results indicate improved coverage of our proposed modifications of BFWS($f5$). {OL$^{rp}_r$-UTP} also improves \defterm{Agile score}\footnote{\defterm{Agile score} is a performance metric that jointly evaluates coverage and runtime. It assigns a score of 1 for problems solved in time $T \leq 1s$, and $1-\frac{\log(T)}{\log(300)}$ for $1s<T \leq 300s$. The total score is summed across all problems.} over the base {OL$^{rp}_r$}, at the cost of some problem coverage.

We further note a correlation between our experimental results for {OL$^{rp}_r$-UTP} and Theorem~\ref{theorem_utp_1}, suggesting that prioritising sampled relaxed plans with tighter estimated bounds on the number of node generations to reach the goal can accelerate search, albeit at the expense of coverage, as it may reduce the robustness that comes with using uniform weights discussed in Corollary~\ref{cor_umpp_1}.
UTP weights promote a more committed search, whereby a focus on optimistic relaxed plan samples can lead to earlier solutions when these estimates are accurate, but may also mislead the search when they are not.
The latter case arises when sampled relaxed plans diverge from valid plans, potentially omitting important actions or facts, or including unhelpful ones.

As fact probability extraction relies on the sampling of relaxed plans, we expect the quality of approximations to align with that of well-studied relaxed heuristics. For instance, our planners perform strongly on Settlers \cite{long20033rd}, a challenging domain where relaxed-plan-heuristic planners LAMA and Maidu outperform BFWS baselines (Table~\ref{tab:comparative_performance_large2}). Moreover, sets of relatively poor plan-cost cases for variants in Table~\ref{tab:BFWS_1} appear to be domain-specific. We do not exclude that additional mechanisms related to the probability estimation process, such as observing sets of facts rather than actions, may impact per-domain amenability, informing future directions to improve proposed heuristics.

\section{Concluding Remarks}
We introduce a model that provides an intuitive description of goal-intendedness given a set of underlying plans in an instance, and use it to study the behaviour of planners that adopt goal intentionality as heuristic signal. Our model and proposed heuristics are shown to belong to a new class of divergence-based goal recognition estimates.
As a proof of concept, we propose two heuristics which improve the base performance of BFWS, matching complex IPC planners in Agile scores, while exceeding their coverage.
We provide empirical evidence of correlation between our experimental results and the properties of our simplified theoretical model.

Our proposed planning-as-goal-recognition theoretical framework offers a new perspective on heuristic search, interpreting evaluation functions as processes that infer the intention of trajectories to the current state with respect to the goal. Trajectories, therefore, do not only reveal the cost so far, but also their goal-intendedness.
This aspect can inform the design of new heuristics, and may provide a novel lens for analysing the properties of well-established classical planning heuristics.
The one-off time cost of the information extraction phase in the proposed heuristics opens the door to more informed and expensive estimation methods to further improve problem solving capability.
The probabilistic nature of the intentionality framework can also lead to new solutions in related problems, such as learned heuristics and hybrid planning, and facilitate connections between planning, goal recognition, and non-symbolic fields of AI.
% }

\section{Acknowledgements}

We thank the reviewers for their valuable feedback. Giacomo Rosa is supported by the Melbourne Research Scholarship established by the University of Melbourne. This research was supported by use of the Nectar Research Cloud and by the Melbourne Research Cloud. The Nectar Research Cloud is a collaborative Australian research platform supported by the NCRIS-funded Australian Research Data Commons (ARDC).

\bibliography{bib-keys-full,aaai2026.bib}

\clearpage
\twocolumn[
\vspace*{1em}
\begin{center}
{\LARGE \bf Technical Appendix \par}
\vspace{6em}
\end{center}
]

\label{AppA}
\section{Appendix A: Describing IRPL Likelihood through the Divergence-based Model.}

\paragraph{IRPL Likelihood}
In the IRPL model, the joint probability of a plan $\pi$ with observation $O$ is $P(O,\pi \mid G)=P(O \mid \pi,G)P(\pi \mid G)$, where $P(O \mid \pi,G)$ is 1 if $O \sqsubseteq_{\mathrm{pfx}} \pi$, and 0 otherwise. We can rearrange Equation~4 such that the denominator normalizes each weight in the numerator summation, to clearly visualize that adding probabilities for plans that extend $O$ is equivalent to marginalizing the joint probability:
\begin{align}
    P(O \mid G) &= \sum_{\pi' \in C_G(O)} \frac{w(\pi')}{\sum_{\pi'' \in \hat{\M}_G}w(\pi'')} \nonumber \\
     &= \sum_{\pi' \in C_G(O)} P(\pi' \mid G) \nonumber \\
    &= \sum_{\pi\in\hat{\M}_G}P(O, \pi \mid G) \nonumber
\end{align}

\vspace{4em}
\paragraph{Distributions $d_{O,G}$ and $d_G$}
For calculating the KL-divergence, we assume distribution $d_{O,G}$ perfectly describes the plan library, representing the exact domain and distribution of full plans i.e. each discrete variable is a time-step, with the discrete values representing the action selected at that time-step, and a dummy action for terminated plans (because they are non-goal-extending), to make them match the domain of the longest plan in the plan library. Let $\X_{\text{IRPL}}$ represent this domain. 

Every exclusive outcome $x \in \X_{\text{IRPL}}$ thus coincides with a plan $\pi \in \hat{\M}_G$ through a bijective mapping, with probability of $x$ equal to 
\begin{equation} \label{eq_app_1}
d_{O,G}(x):=\rho(x \mid G)=P(\pi_x \mid G)=\frac{w(\pi_x)}{\sum_{\pi' \in \hat{\M}_G}w(\pi')}
\end{equation}
where $\pi_x$ is used to represent the corresponding plan for $x$.

$y^*\in \Y$ similarly represents observed I-reachable a-trajectory $O$, with the domain $\Y$ being a marginal domain of $\X_{\text{IRPL}}$, as described in the section. For every plan $\pi \in C_G(O)$ there is a corresponding outcome $x_{\pi}$ that is consistent with $y^*$ and has probability $\rho(x \mid G)$; that is, 
\begin{equation} \label{eq_app_2}
    \rho(y^* \mid x, G)=P(O \mid \pi_x,G)
\end{equation}
$d_{O,G}^\Y$ then represents the one-hot distribution $\rho(y \mid y^*, G)$.

\paragraph{KL-Divergence Model Likelihood}
 Following discussed divergence-likelihood special case \#1, for outcomes $y$ in the domain of observation $y^*$, we have that 
 
 % we marginalize every variable in $d_{O,G}$ from a time-step greater than the last time-step of the observed I-reachable a-trajectory $y^*$, and select the exclusive event for observation $y^*$ from the marginal distribution $\rho(y \mid G)$. 
%  % $d_{O,G}^\Y$ becomes a marginalization of the joint probability 
% \[
% \sum_x\rho(y,x|G)=\rho(y \mid G)
% \]
% \[
%     D_{\text{KL}(d_{O,G}^\Y || d_{O,G}^\Y)=-\log{\rho(y^* \mid G)}}
% \]

\[
e^{-D_{\text{KL}}(d_{O,G}^\Y || d_{G}^\Y)}=\rho(y^* \mid G)
\]
From (\ref{eq_app_1}) and (\ref{eq_app_2}) we thus have that
\begin{align}
\rho(y^*\mid G)&=\sum_x\rho(y^*,x \mid G) \nonumber \\
&=\sum_x\rho(y^* \mid x,G)\rho(x \mid G) \nonumber \\
&=\sum_{\pi}P(O \mid \pi,G)P(\pi \mid G) \nonumber \\
&= P(O \mid G) \nonumber
\end{align}

\vspace{6em}
\section{Appendix B: Sampling Supporter Actions}
Extracting \emph{fact observation probabilities} relies on obtaining sets of supporter actions that constitute relaxed plans, and evaluating the probability that each fact is supported by at least one action in a relaxed plan.
We provide the algorithm for sampling supporter actions as presented in \cite{wilken2024fact} in Alg.~\ref{alg:sample_supporters1}. In red we highlight portions of the original algorithm we modified in our implementation in Alg.~\ref{alg:sample_supporters2}. First, we directly insert all goal condition facts into the set $C$ of facts that must be achieved (line 5), and thus search for achievers for the entire goal statement in a single pass, as opposed to separately seeking achievers for each goal fact and merging achievers for different goal facts together at a later stage. This is done to improve overall computational cost, and account for actions that achieve more than one relevant fact for the entire goal statement. Our second modification involves selecting the set of candidate supporter actions. Rather than all supporter actions in the earliest layer of the relaxed planning graph in which they occur, we select the set of valid supporter actions with minimum $h^{add}$ heuristic value (line 13). 

As mentioned in the main text, when estimating $P(O \mid \neg G)$, the algorithm seeks a relaxed plan that achieves all non-goal facts instead, using the extracted supporter actions to estimate the probability of occurrence of each fact.

\begin{algorithm} [H]
\caption{Supporter Action Sampling - Original.}\label{alg:sample_supporters1}
\begin{algorithmic}[1]
\Function{SampleRelevantActions}{$g_i$, RPG, $s_0$, $N$}
    \State $count \gets \{\}$ \Comment{Map from action to count in samples}
    \State $samples \gets [\,]$ \Comment{List of generated supporter sets}
    \For{$i \in \text{range}(0, N)$}
        \color{red}
        \State $C \gets g_i$ \Comment{Facts to be supported}
        \color{black}
        \State $found \gets \emptyset$
        \State $sups \gets \emptyset$
        \For{$t = \text{RPG.levels}$ \textbf{to} $0$}
            \State $newC \gets \emptyset$
            \While{$|C| > 0$}
                \State $p \gets C.\text{pop}()$
                \State $psups \gets \emptyset$
                \color{red}
                \For{$t_2 = 0$ \textbf{to} $t$}
                    \ForAll{$a \in \text{RPG.level}(t_2)$}
                        \If{$|\text{add}(a) \cap \{p\}| > 0$}
                            \State $psups \gets psups \cup \{a\}$
                        \EndIf
                    \EndFor
                    \If{$|psups| > 0$}
                        \State \textbf{break}
                    \EndIf
                \EndFor
                \color{black}
                \State $psups \gets$ \Call{minCount}{$psups$, $count$}
                \State $a \gets$ \Call{random}{$psups$}
                \State $found \gets found \cup \{p\}$
                \State $C \gets C \setminus \{p\}$
                \State $sups \gets sups \cup \{a\}$
                \State $count[a] \gets count[a] + 1$
                \ForAll{$n \in \text{pre}(a)$}
                    \If{$n \notin s_0 \land n \notin found \land n \notin C$}
                        \State $newC \gets newC \cup \{n\}$
                    \EndIf
                \EndFor
                \ForAll{$r \in \text{add}(a)$}
                    \State $C \gets C \setminus \{r\}$
                    \State $newC \gets newC \setminus \{r\}$
                \EndFor
            \EndWhile
            \State $C \gets C \cup \{newC\}$
        \EndFor
        \State $samples \gets$ $samples$.\text{add}($sups$)
    \EndFor
    \State \Return $samples$
\EndFunction
\end{algorithmic}
\end{algorithm}

\begin{algorithm}[H]
\caption{Supporter Action Sampling - Ours.}\label{alg:sample_supporters2}
\begin{algorithmic}[1]
\Function{SampleRelevantActions}{$G$, RPG, $s_0$, $N$}
    \State count $\gets \{\}$ \Comment{Map from action to count in samples}
    \State samples $\gets [\,]$ \Comment{List of generated supporter sets}
    \For{$i \in \text{range}(0, N)$}
        \color{red}
        \State \textbf{for} $g_i \in G$ \textbf{do} $C \gets g_i$ \Comment{Support all goal facts}
        % \State $C \gets g_i$ \Comment{Facts to be supported}
        \color{black}
        \State $found$ $\gets \emptyset$
        \State $sups$ $\gets \emptyset$
        \For{$t = \text{RPG.levels}$ \textbf{to} $0$}
            \State $newC \gets \emptyset$
            \While{$|C| > 0$}
                \State $p \gets C.\text{pop}()$
                \State $psups$ $\gets \emptyset$
                \color{red} 
                \State $psups$ $\gets$ \Call{$\scriptstyle \text{MinHAddSupporters}$}{$p$, $t$, RPG}
                \color{black}
                \State $psups$ $\gets$ \Call{minCount}{$psups$, $count$}
                \State $a \gets$ \Call{random}{$psups$}
                \State $found$ $\gets$ $found$ $\cup \{p\}$
                \State $C \gets C \setminus \{p\}$
                \State $sups$ $\gets$ $sups$ $\cup \{a\}$
                \State $count[a] \gets count[a] + 1$
                \ForAll{$n \in \text{pre}(a)$}
                    \If{$n \notin s_0 \land n \notin found \land n \notin C$}
                        \State $newC \gets newC \cup \{n\}$
                    \EndIf
                \EndFor
                \ForAll{$r \in \text{add}(a)$}
                    \State $C \gets C \setminus \{r\}$
                    \State $newC \gets newC \setminus \{r\}$
                \EndFor
            \EndWhile
            \State $C \gets C \cup \{newC\}$
        \EndFor
        \State $samples$ $\gets$ $samples$.\text{add}($sups$)
    \EndFor
    \State \Return $samples$
\EndFunction
\end{algorithmic}
\end{algorithm}

\section{Appendix C: Experimental Details}

\subsection{Number of sampled relaxed plans}
We note that altering the number of sampled relaxed plans in the fact observation probability extraction phase between 10, 100, 1000, and 10000 samples does not impact coverage meaningfully with this technique in empirical tests, so we did not focus on evaluating this aspect in performance benchmarks. We hypothesize that the quality of estimates obtained from relaxed plans, which approximate an inherently biased distribution rather than that of the non-relaxed problem, are not informative enough to gain any benefit from larger sample counts. It mainly affects the computational cost of the information extraction phase, as such we kept it at 100 to have a meaningful number of samples, while retaining low computation times. 

In several problems, the \emph{OL-UTP} version of the heuristic re-weights sampled plans by placing most of the weight ($>$90\%) on a single plan. Given the lower coverage but greater Agile score of this variant, it seems to induce a more ``high-risk high-reward" behaviour: it prioritises relaxed plans that are estimated to lead to solutions with fewer node generations, solving problems earlier when they prove valid, but potentially failing more when they do not.

\subsection{Measuring Agile score}
The Agile score is calculated for all planners using the overall process runtime provided by the Lab environment \cite{seipp2017downward}. This is done to obtain less biased estimates, without relying on the search runtime measured by the planning libraries themselves, which may vary in when they start or stop measurements.

\subsection{BFWS Variants}

\begin{itemize}
    \item BFWS($f5$): BFWS solver with evaluation function $f5:=\langle w_{\#r,\#g}, \#g \rangle$, where $\#r$ is the $\#r$ partition function from \cite{lipovetzky2017best}, and $\#g$ is the goal count heuristic. Remaining ties are broken by path length.
    \item BFWS($f5$)-$ol_r^{rp}$: BFWS($f5$) where third ties are broken by $ol_r^{rp}$, and remaining ties are broken by path length. BFWS($f5$)-$ol_r^{rp}$-UTP adopts $ol_r^{rp}$ with an additional UTP sample relaxed plan weighting function.
    \item BFWS($f5$)-$op_r^{rp}$: BFWS($f5$) where third ties are broken by $op_r^{rp}$, and remaining ties are broken by path length.
    \item BFWS($f5$)-Landmarks: BFWS solver with evaluation function $f5:=\langle w_{\#r,lm}, lm \rangle$, where $lm$ is the Landmarks heuristic \cite{richter2008landmarks}. Remaining ties are broken by path length.
    \item BFWS($f5$)$_t$-Landmarks-$ol_r^{rp}$: BFWS($f5$)-Landmarks where third ties are broken by $ol_r^{rp}$, and remaining ties are broken by path length. A key consideration is that it automatically reverts to using $\#g$ instead of the landmarks heuristic if it detects more than 100 facts in the goal condition. This is a practical consideration that is done to limit the number of problem partitions created by the planner, as traces used by $ol_r^{rp}$ start only at the most recent heuristic improvement. If there are too many improvements, then $ol_r^{rp}$ is restarted too often and does not inform the search. Since the number of landmarks is greater or equal to the number of goal facts, switching helps reduce this problem in such cases.
    % \item BFWS($f5$)$_t$: BFWS($f5$) solver that substitutes the normal open list with a trimmed open list \cite{rosa2024count}. This open list variant limits the maximum depth allowed of the binary heap, limiting memory usage. The maximum binary heap depth allowed by the open list is a hyperparameter, and is set to 18, following results in \cite{rosa2024count}. All BFWS$_t$($f5$) variants are the same as variants described above, but also adopt a trimmed open list.
    \item BFWS($f5$)$_t$-ol$_r^{rp}$-Dual: A \defterm{dual-strategy} solver, where a frontend solver attempts to solve the problem and, if it fails, it falls back to a backend solver. The frontend solver is BFWS($f5$)$_t$-Landmarks-$ol_r^{rp}$, the backend is the backend solver of \emph{Dual-BFWS} \cite{lipovetzky2017best}. This configuration is the same as \emph{BFNoS-Dual} \cite{rosa2024count} albeit substituting the BFNoS frontend with our proposed BFWS($f5$)$_t$-Landmarks-$ol_r^{rp}$. As such, like \emph{BFNoS-Dual}, it adopts both time and memory thresholds to signal the fallback of the frontend solver. We set the time thresholds to 1500 sec and the memory threshold to 6000 MB. We use as a reference the 1600 sec and 6000 MB thresholds of \emph{BFNoS-Dual}, but reduce the time threshold due to the faster solution times of our proposed solver compared to BFNoS.
\end{itemize}

\subsection{Sources and commands for Baseline Planners}

\paragraph{BFWS, Dual-BFWS, Approximate BFWS.} Run on LAPKT \cite{lapkt}.

\begin{verbatim}
BFWS --grounder FD -d <domain> 
    -p <problem> --search_type BFWS-f5 
\end{verbatim}

\begin{verbatim}
BFWS --grounder FD -d <domain> 
    -p <problem> --search_type DUAL-BFWS
\end{verbatim}

\begin{verbatim}
Approximate_BFWS --grounder Tarski
    -d <domain> -p <problem>  
    --seed <seed>
\end{verbatim}

\paragraph{BFNoS-Dual} Run on LAPKT-BFNoS \cite{lapkt-bfnos}.

\begin{verbatim}
BFWS --grounder FD
    -d <domain> -p <problem>  
    --search_type BFNOS
    --fallback_backend
    --backend_type DUAL-BFWS
    --time_limit 1600
    --memory_limit 6000
    --tol_seed <seed>
\end{verbatim}

\paragraph{LAMA} Run on the Fast Downward planning system \cite{helmert2006fast}.

\begin{verbatim}
--alias lama-first <domain> <problem>
\end{verbatim}

\paragraph{Scorpion-Maidu} We run a ``first" version of Scorpion Maidu \cite{correa-et-al-ipc2023c}, which halts after finding a solution rather than improving the plan, from the IPC-2023 branch of the code base \cite{maidu2023ipc}.
\begin{verbatim}
<domain> <problem>
--evaluator 'hlm=lmcount(
lm_factory=lm_reasonable_orders_hps(
    lm_rhw()),
transform=adapt_costs(one),pref=false)' 
--evaluator 
    'hff=ff(transform=adapt_costs(one))' 
--search 'lazy(alt([single(hff), 
single(hff, pref_only=true),
single(hlm), 
single(hlm, pref_only=true), 
type_based([hff, g()]), 
novelty_open_list(novelty(width=2, 
consider_only_novel_states=true, 
reset_after_progress=True), 
break_ties_randomly=False, 
handle_progress=move)],
boost=1000),preferred=[hff,hlm], 
cost_type=one,reopen_closed=false)'
\end{verbatim}

\section{Appendix D: Comparison Graphs and Coverage Tables}

\begin{figure*}[t]
  \centering

  \begin{subfigure}{0.32\textwidth}
    \centering
    \includegraphics[width=\linewidth]{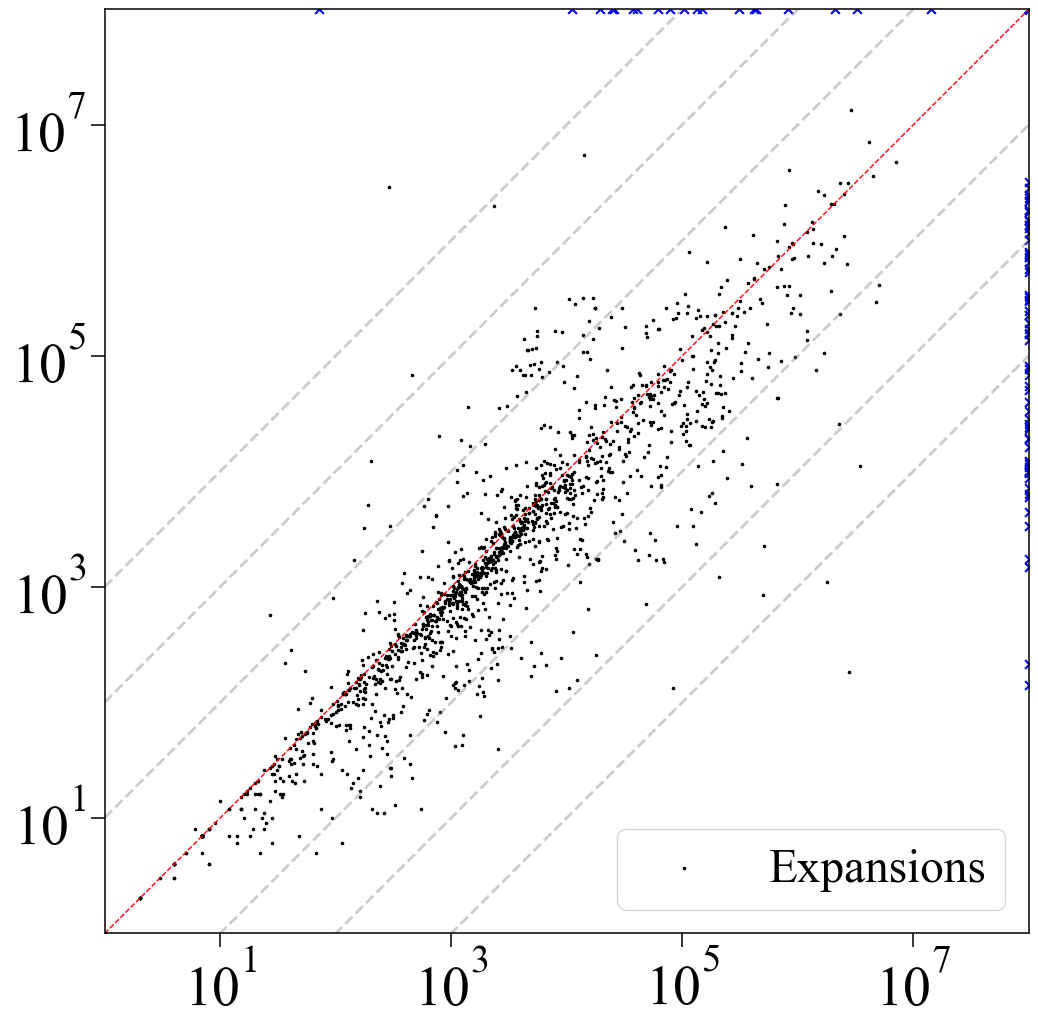}
    \caption{BFWS vs.\ BFWS-ol$_r^{rp}$}
    \label{fig:sub11}
  \end{subfigure}
  \hfill
  \begin{subfigure}{0.32\textwidth}
    \centering
    \includegraphics[width=\linewidth]{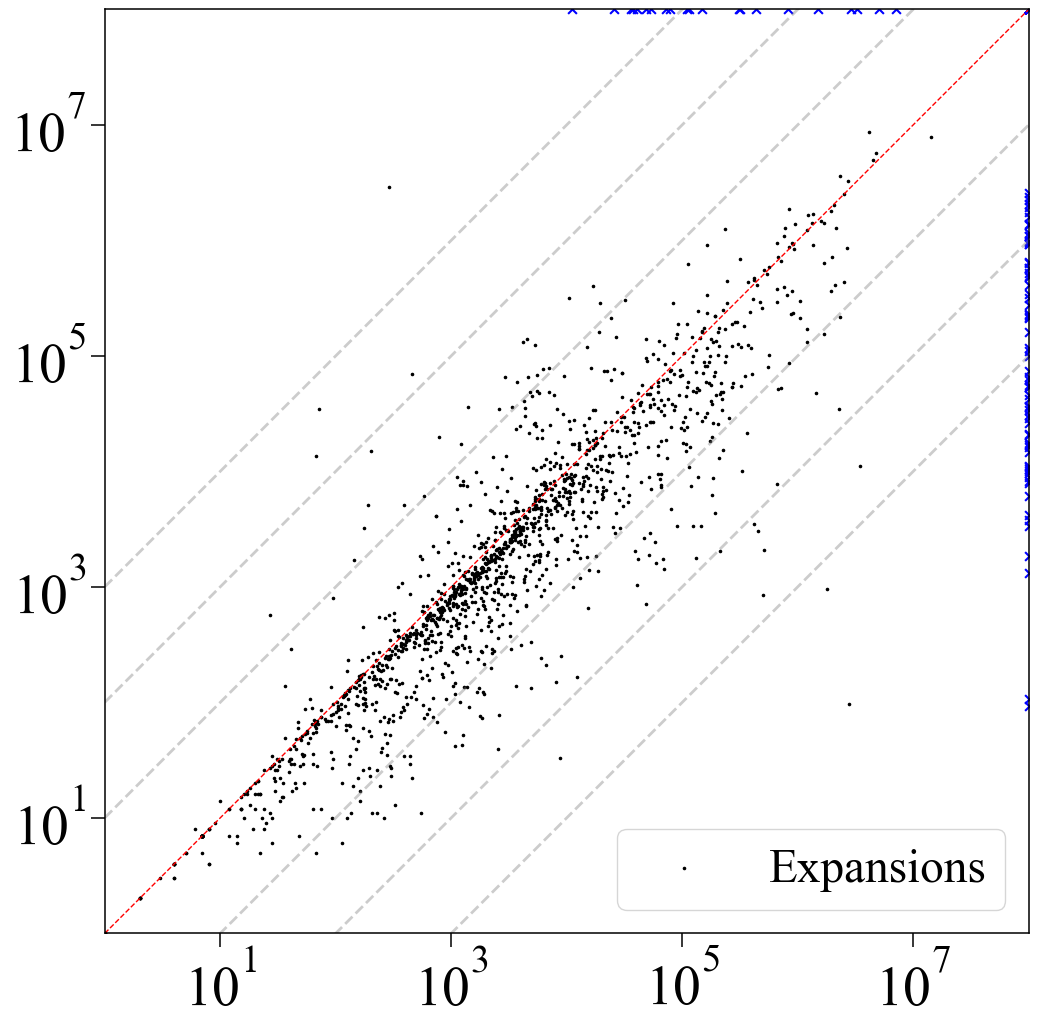}
    \caption{BFWS vs.\ BFWS-op$_r^{rp}$}
    \label{fig:sub12}
  \end{subfigure}
  \hfill
  \begin{subfigure}{0.32\textwidth}
    \centering
    \includegraphics[width=\linewidth]{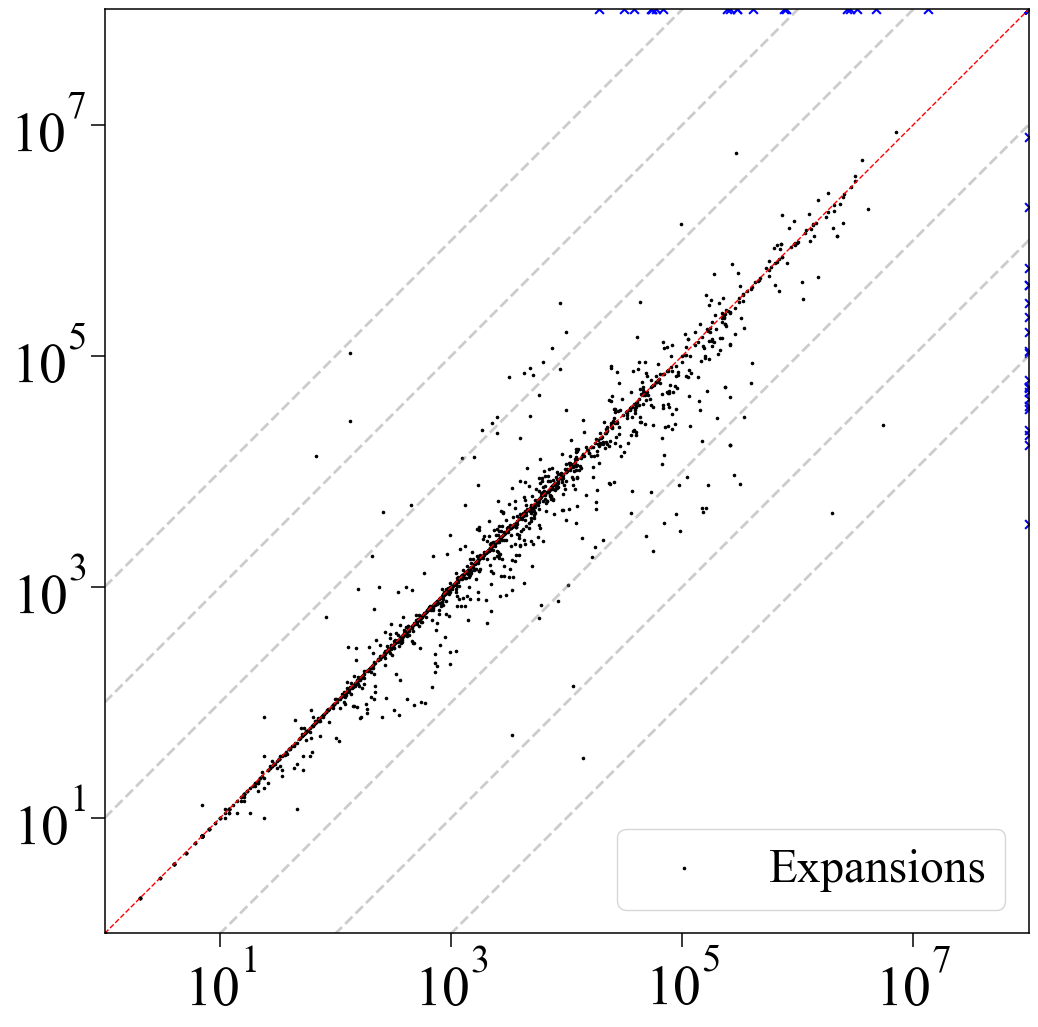}
    \caption{BFWS-ol$_r^{rp}$ vs.\ BFWS-op$_r^{rp}$}
    \label{fig:sub13}
  \end{subfigure}

  \caption{Number of expansions (x-axis vs.\ y-axis) (lower is better). Comparing BFWS($f5$), BFWS($f5$)-ol$_r^{rp}$, and BFWS($f5$)-op$_r^{rp}$. The high-density regions lie below the red line, indicating a systematic improvement over the baseline BFWS($f5$).}
  \label{fig:count_prob_analysis1}
\end{figure*}

\begin{figure*}[t]
  \centering

  \begin{subfigure}{0.32\textwidth}
    \centering
    \includegraphics[width=\linewidth]{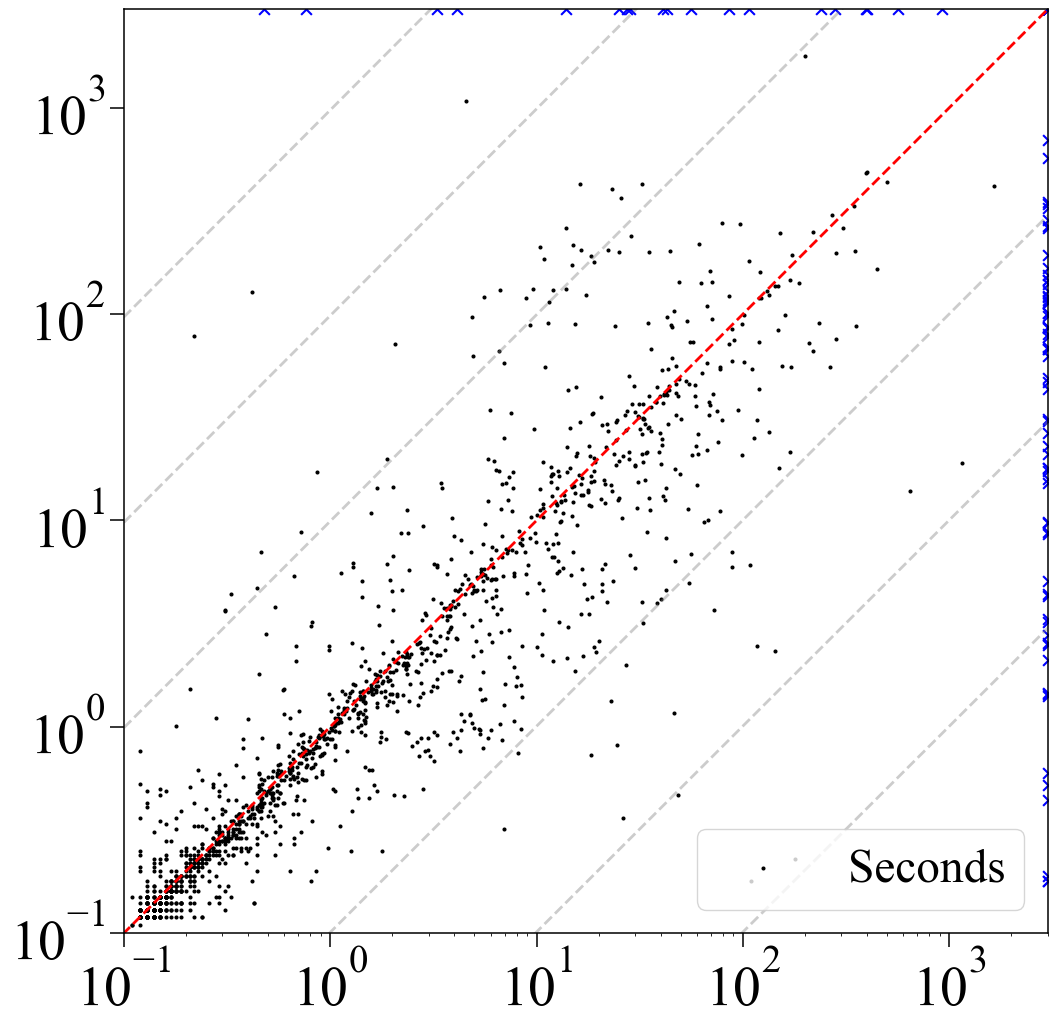}
    \caption{BFWS vs.\ BFWS-ol$_r^{rp}$}
    \label{fig:sub21}
  \end{subfigure}
  \hfill
  \begin{subfigure}{0.32\textwidth}
    \centering
    \includegraphics[width=\linewidth]{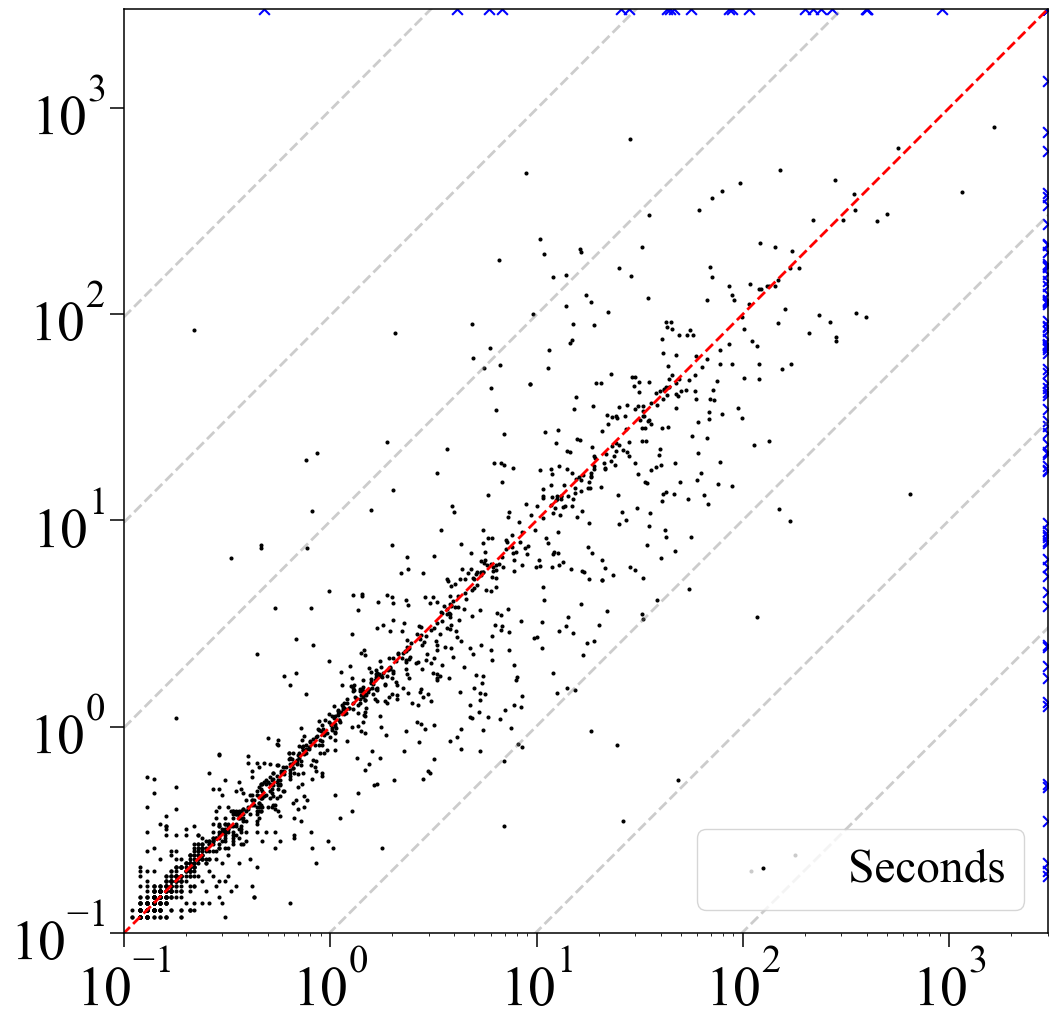}
    \caption{BFWS vs.\ BFWS-op$_r^{rp}$}
    \label{fig:sub22}
  \end{subfigure}
  \hfill
  \begin{subfigure}{0.32\textwidth}
    \centering
    \includegraphics[width=\linewidth]{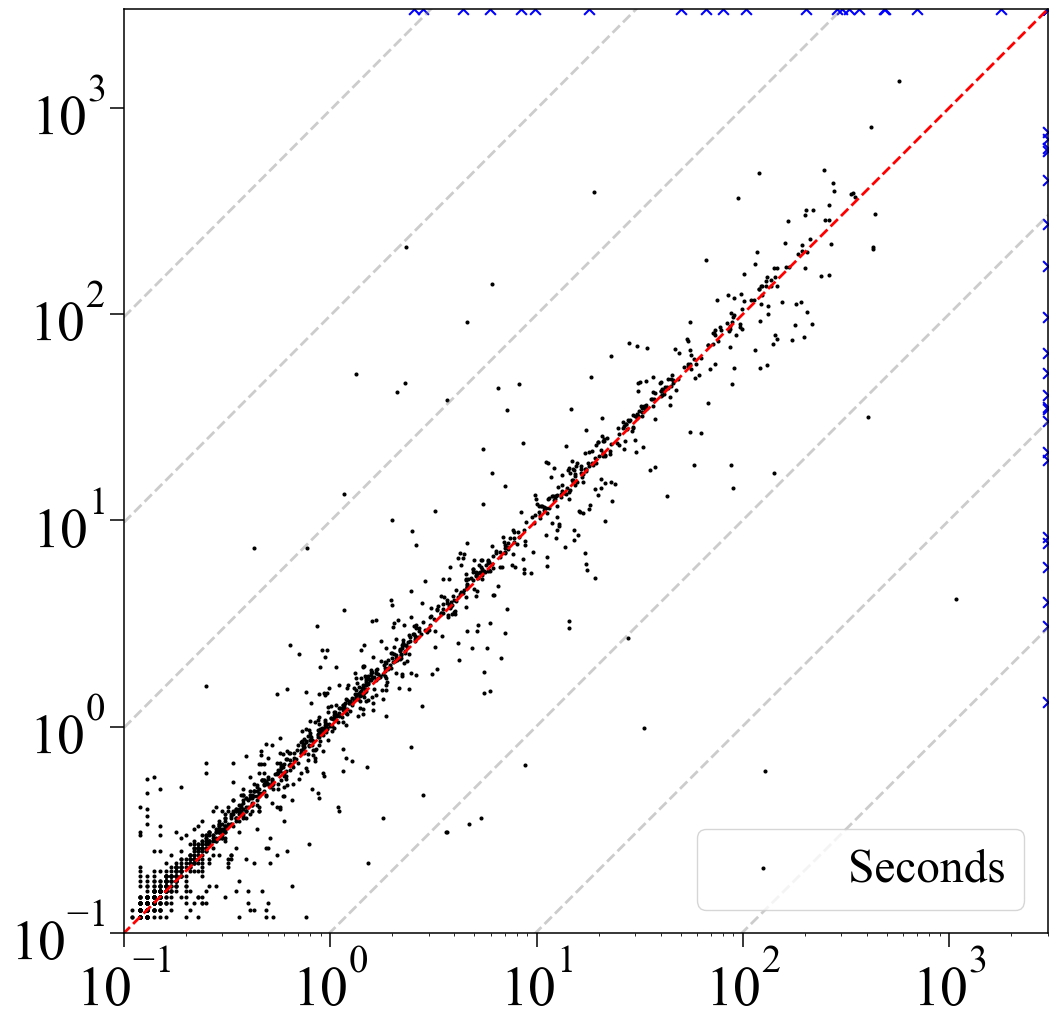}
    \caption{BFWS-ol$_r^{rp}$ vs.\ BFWS-op$_r^{rp}$}
    \label{fig:sub23}
  \end{subfigure}

  \caption{Total solver time (x-axis vs.\ y-axis) (lower is better). Comparing BFWS($f5$), BFWS($f5$)-ol$_r^{rp}$, and BFWS($f5$)-op$_r^{rp}$.}
  \label{fig:count_prob_analysis2}
\end{figure*}

\begin{figure*}[t]
  \centering

  \begin{subfigure}{0.32\textwidth}
    \centering
    \includegraphics[width=\linewidth]{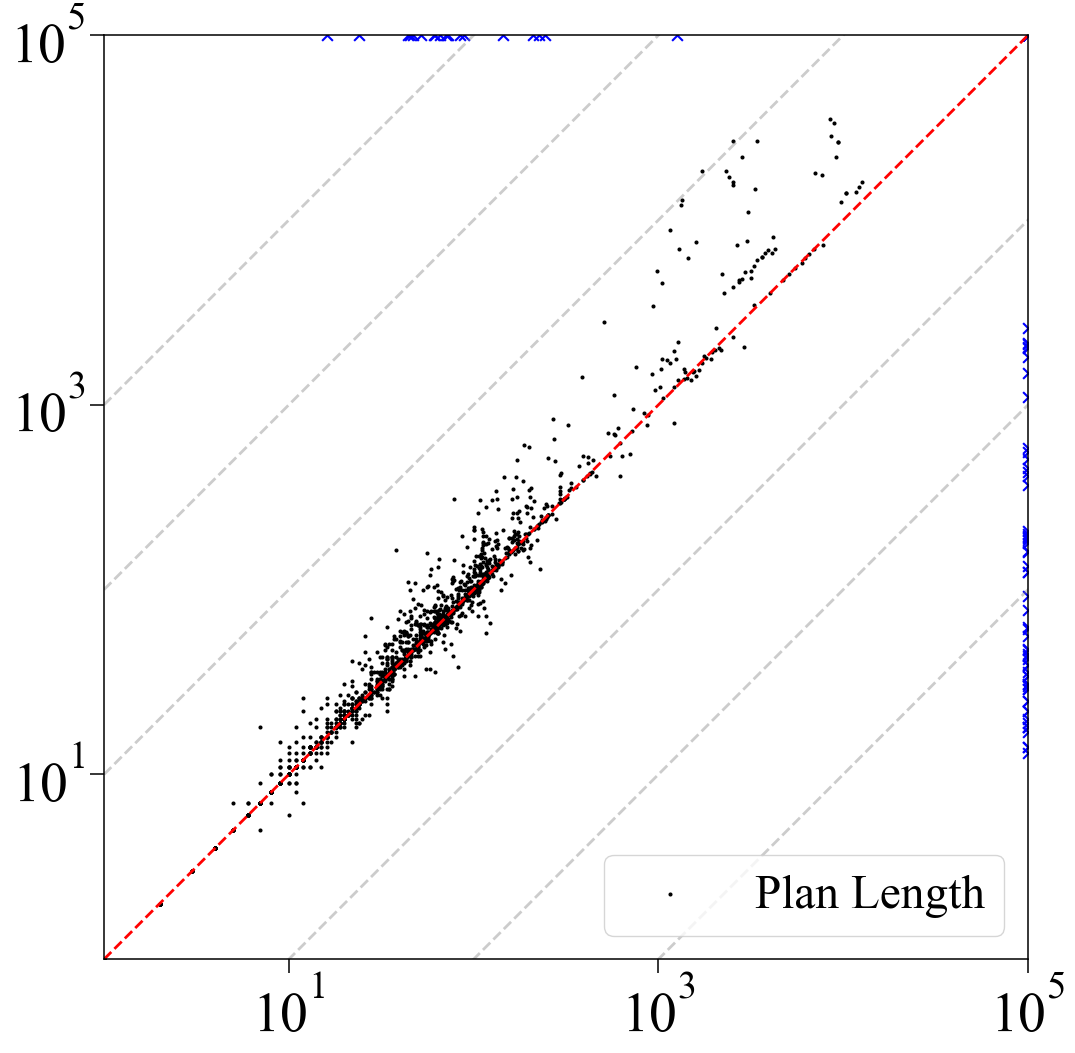}
    \caption{BFWS vs.\ BFWS-ol$_r^{rp}$}
    \label{fig:sub31}
  \end{subfigure}
  \hfill
  \begin{subfigure}{0.32\textwidth}
    \centering
    \includegraphics[width=\linewidth]{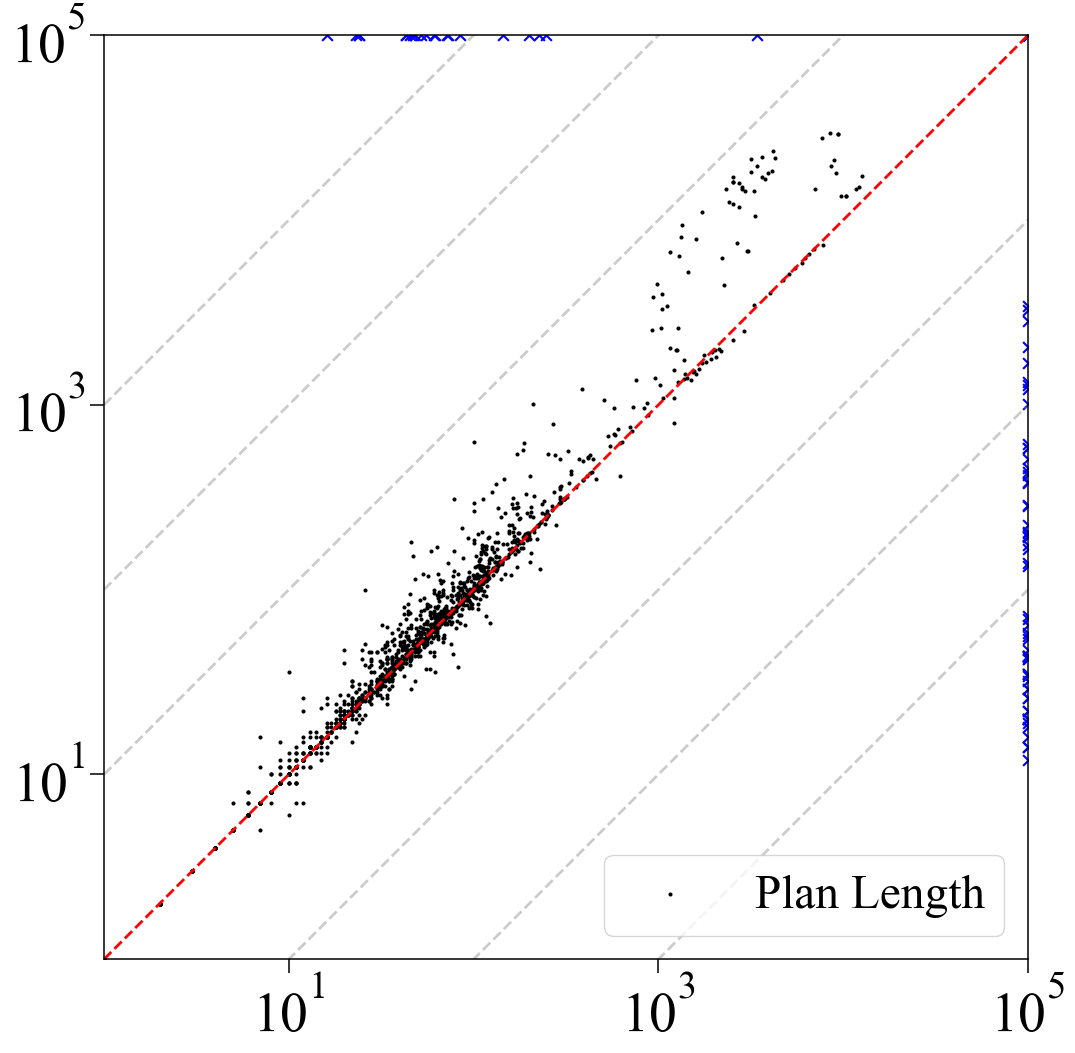}
    \caption{BFWS vs.\ BFWS-op$_r^{rp}$}
    \label{fig:sub32}
  \end{subfigure}
  \hfill
  \begin{subfigure}{0.32\textwidth}
    \centering
    \includegraphics[width=\linewidth]{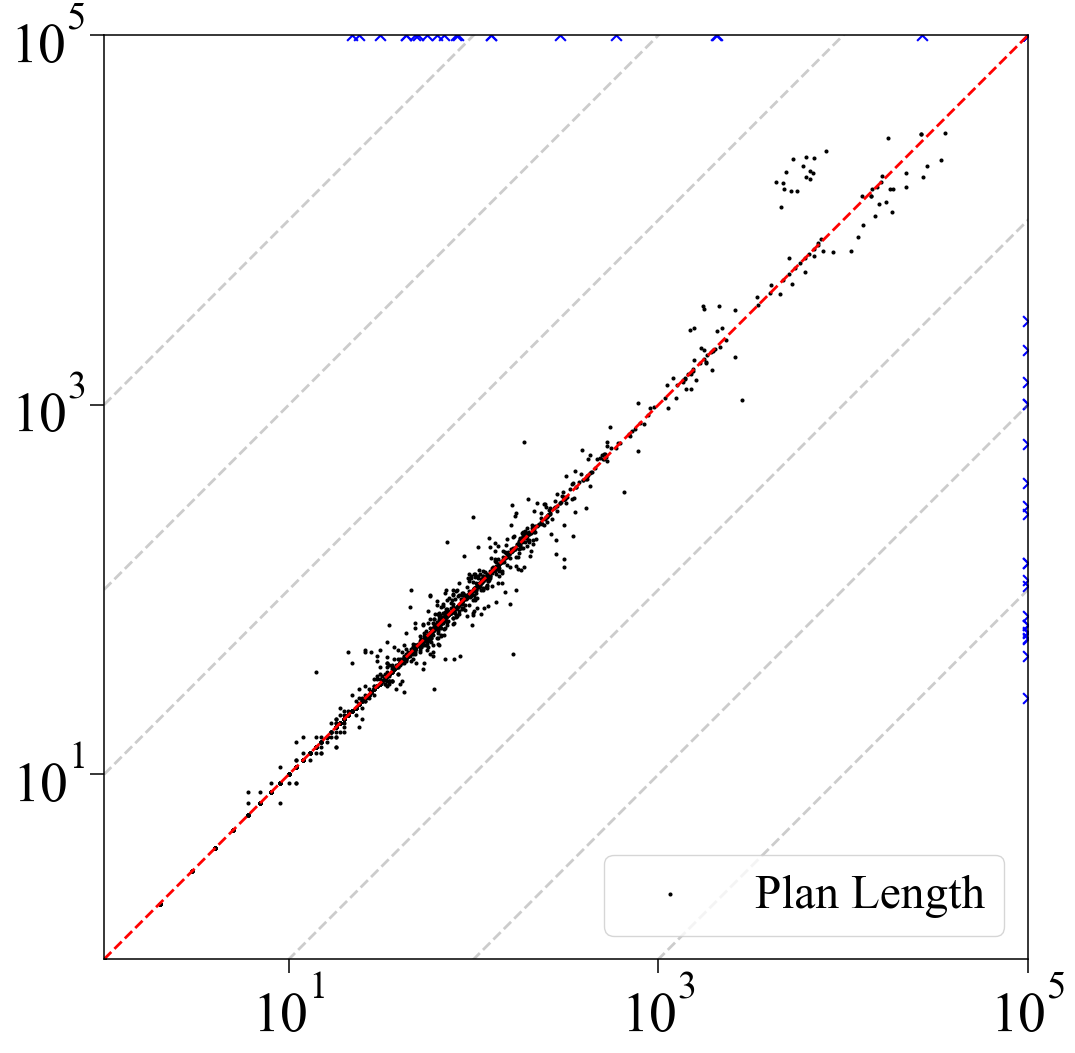}
    \caption{BFWS-ol$_r^{rp}$ vs.\ BFWS-op$_r^{rp}$}
    \label{fig:sub33}
  \end{subfigure}

  \caption{Plan cost (Unit action costs) (x-axis vs.\ y-axis) (lower is better). Comparing BFWS($f5$), BFWS($f5$)-ol$_r^{rp}$, and BFWS($f5$)-op$_r^{rp}$.}
  \label{fig:count_prob_analysis3}
\end{figure*}

\clearpage
\newpage

\begin{table*}[tbhp]
    \centering
    % \small
    \resizebox{!}{0.5\textheight}{
    \begin{tabular}{|l|c|c|c|c|c|c|} \hline
         \rule{0pt}{2.2ex}Domain & BFWS($f5$) &  BFWS($f5$)-ol$_r^{rp}$ &  BFWS($f5$)-op$_r^{rp}$ \\ \hline
        agricola-sat18-strips (20) &\makecell{10}& \makecell{{12$\pm$0.6}} &  \makecell{ 12$\pm$0.8 }\\ 
        airport (50)              &\makecell{{47}}& \makecell{{47$\pm$0.6}} &\makecell{ 48$\pm$0.0 }\\
        assembly (30)             &\makecell{30}& \makecell{30$\pm$0.5}&\makecell{ 29$\pm$0.5 } \\
        barman-sat14-strips (20)  &\makecell{20}& \makecell{20$\pm$0.0}&\makecell{ 20$\pm$0.0 }\\ 
        blocks (35)               &\makecell{35}& \makecell{35$\pm$0.0} &\makecell{ 35$\pm$0.0 }\\
        caldera-sat18-adl (20)    &\makecell{15}& \makecell{{18$\pm$0.6}}&\makecell{ 16$\pm$1.0 }\\ 
        cavediving-14-adl (20)    &\makecell{{7}}& \makecell{{7$\pm$0.0}}&\makecell{ 7$\pm$0.0 } \\
        childsnack-sat14-strips (20)&\makecell{0}& \makecell{0$\pm$0.0} &\makecell{ 0$\pm$0.0 }\\ 
        citycar-sat14-adl (20)    &\makecell{{5}}& \makecell{{5$\pm$0.6}}&\makecell{ 4$\pm$0.4 } \\ 
        data-network-sat18-strips (20) &\makecell{11}& \makecell{{17$\pm$0.8}}&\makecell{ 15$\pm$1.1 }\\ 
        depot (22)                &\makecell{{22}}& \makecell{{22$\pm$0.0}} &\makecell{ 22$\pm$0.0 }\\ 
        driverlog (20)            &\makecell{20}& \makecell{20$\pm$0.0}&\makecell{ 20$\pm$0.0 }\\
        elevators-sat11-strips (20) &\makecell{20}& \makecell{20$\pm$0.0} &\makecell{ 20$\pm$0.4 }\\ 
        flashfill-sat18-adl (20)  &\makecell{12}& \makecell{16$\pm$0.5}&\makecell{ 17$\pm$0.5 }\\
        floortile-sat14-strips (20) &\makecell{2}& \makecell{2$\pm$0.0} &\makecell{ 2$\pm$0.0 }\\
        folding (20)              &\makecell{8}& \makecell{8$\pm$0.8}&\makecell{ 8$\pm$0.5 }\\
        freecell (80)             &\makecell{{80}}& \makecell{{80$\pm$0.0}}&\makecell{ 80$\pm$0.0 }\\ 
        ged-sat14-strips (20)     &\makecell{20}& \makecell{20$\pm$0.0}&\makecell{ 20$\pm$0.0 }\\
        grid (5)                 &\makecell{5}& \makecell{5$\pm$0.0}&\makecell{ 5$\pm$0.0 }\\
        gripper (20)              &\makecell{20}& \makecell{20$\pm$0.0} &\makecell{ 20$\pm$0.0 }\\ 
        hiking-sat14-strips (20)  &\makecell{{11}}& \makecell{{9$\pm$1.5}} &\makecell{ 10$\pm$2.0 }\\ 
        labyrinth (20)            &\makecell{15}& \makecell{{15$\pm$0.0}} &\makecell{ 15$\pm$0.0 }\\ 
        logistics00 (28)          &\makecell{28}& \makecell{28$\pm$0.0} &\makecell{ 28$\pm$0.0 }\\ 
        maintenance-sat14-adl (20) &\makecell{{17}}& \makecell{{16$\pm$0.5}}& \makecell{ 17$\pm$0.4 }\\ 
        miconic (150)              &\makecell{150}& \makecell{150$\pm$0.0}&\makecell{ 150$\pm$0.0 }\\ 
        movie (30)                &\makecell{30}& \makecell{30$\pm$0.0} &\makecell{ 30$\pm$0.0 }\\ 
        mprime (35)               &\makecell{30}& \makecell{34$\pm$0.0}&\makecell{ 34$\pm$0.0 }\\ 
        mystery (30)              &\makecell{19}& \makecell{{18$\pm$0.5}} & \makecell{ 18$\pm$0.0 }\\ 
        nomystery-sat11-strips (20) &\makecell{16}& \makecell{15$\pm$0.8}&\makecell{ 14$\pm$0.4 }\\ 
        nurikabe-sat18-adl (20)   &\makecell{16}& \makecell{16$\pm$0.8}&\makecell{ 16$\pm$0.8 }\\ 
        openstacks-sat14-strips (20) &\makecell{20}& \makecell{20$\pm$0.0}&\makecell{ 20$\pm$0.0 }\\ 
        organic-synthesis-split-sat18-strips (20) & \makecell{5}&\makecell{5$\pm$0.0}&  \makecell{ 6$\pm$0.0 }\\ 
        parcprinter-sat11-strips (20) &\makecell{9} &\makecell{16$\pm$0.0}& \makecell{ 16$\pm$0.5 }\\ 
        parking-sat14-strips (20) &\makecell{20}& \makecell{20$\pm$0.0}& \makecell{ 20$\pm$0.0 }\\ 
        pathways (30)             &\makecell{24}& \makecell{29$\pm$0.5}&\makecell{ 29$\pm$0.8 }\\ 
        pegsol-sat11-strips (20)  &\makecell{19}& \makecell{19$\pm$0.0}& \makecell{ 20$\pm$0.0 }\\ 
        pipesworld-notankage (50)  &\makecell{{50}}& \makecell{{50$\pm$0.0}}& \makecell{ 50$\pm$0.0 }\\ 
        pipesworld-tankage (50)   &\makecell{44}& \makecell{{44$\pm$0.8}}& \makecell{ 44$\pm$1.3 }\\ 
        psr-small (50)             &\makecell{47}& \makecell{48$\pm$0.0}&\makecell{ 47$\pm$0.0 }\\ 
        quantum-layout (20)       &\makecell{20}& \makecell{20$\pm$0.0}& \makecell{ 20$\pm$0.0 } \\ 
        recharging-robots (20)     &\makecell{{14}}& \makecell{{14$\pm$0.5}}&  \makecell{ 14$\pm$0.4 } \\ 
        ricochet-robots (20)      &\makecell{{1}}& \makecell{20$\pm$0.0}&\makecell{ 19$\pm$0.0 }\\ 
        rovers (40)               &\makecell{39}&  \makecell{40$\pm$0.6}& \makecell{ 40$\pm$0.4 }\\ 
        rubiks-cube (20)          &\makecell{5}&  \makecell{5$\pm$0.0}& \makecell{ 5$\pm$0.0 }\\ 
        satellite (36)            &\makecell{28}&  \makecell{30$\pm$0.0}& \makecell{ 30$\pm$0.0 }\\ 
        scanalyzer-sat11-strips (20) &\makecell{20}&  \makecell{20$\pm$0.5}&\makecell{ 20$\pm$0.0 }\\ 
        schedule (150)             &\makecell{149}&   \makecell{150$\pm$0.0}&\makecell{ 150$\pm$0.0 }\\ 
        settlers-sat18-adl (20)   &\makecell{10}&  \makecell{10$\pm$1.1}& \makecell{ 9$\pm$0.5 }\\ 
        slitherlink (20)          &\makecell{{4}}&  \makecell{{4$\pm$0.5}} & \makecell{ 4$\pm$0.5 }\\ 
        snake-sat18-strips (20)   &\makecell{{18}}&  \makecell{{19$\pm$1.0}}& \makecell{ 18$\pm$0.9 }\\ 
        sokoban-sat11-strips (20) &\makecell{15}&  \makecell{14$\pm$0.6} & \makecell{ 13$\pm$1.3 }\\ 
        spider-sat18-strips (20)  &\makecell{14}&\  \makecell{12$\pm$1.4}&  \makecell{ 13$\pm$0.8 }\\ 
        storage (30)              &\makecell{{29}}&  \makecell{{30$\pm$0.5}}& \makecell{ 29$\pm$0.5 } \\ 
        termes-sat18-strips (20)  &\makecell{9}&  \makecell{8$\pm$0.5}& \makecell{ 9$\pm$0.4 }\\ 
        tetris-sat14-strips (20)  &\makecell{{20}}&  \makecell{{20$\pm$0.0}}& \makecell{ 20$\pm$0.0 }\\ 
        thoughtful-sat14-strips (20) &\makecell{{20}}&  \makecell{{20$\pm$0.0}}&  \makecell{ 20$\pm$0.0 }\\ 
        tidybot-sat11-strips (20)  &\makecell{{20}}&  \makecell{{20$\pm$0.5}}&  \makecell{ 20$\pm$0.4 }\\ 
        tpp (30)                  &\makecell{29} &  \makecell{30$\pm$0.6}& \makecell{ 30$\pm$0.0 }\\
        transport-sat14-strips (20) &\makecell{{20}}&  \makecell{{20$\pm$0.0}}&  \makecell{ 20$\pm$0.0 }\\
        trucks-strips (30)         &\makecell{8}&  \makecell{9$\pm$0.8}& \makecell{ 9$\pm$0.5 }\\ 
        visitall-sat14-strips (20) &\makecell{20}&  \makecell{20$\pm$0.0}& \makecell{ 20$\pm$0.0 }\\ 
        woodworking-sat11-strips (20) &\makecell{{20}}&  \makecell{20$\pm$0.0}& \makecell{ 20$\pm$0.0 }\\ 
        zenotravel (20)           &\makecell{20}&  \makecell{20$\pm$0.0}& \makecell{ 20$\pm$0.0 }\\ \hline
         \rule{0pt}{2.2ex}\textbf{Coverage (1831)} &\makecell{1510}&  \makecell{1560$\pm$5.0}  & \makecell{1556$\pm$1.5} \\ \hline
         \rule{0pt}{2.2ex}\textbf{\% Score (100\%)} &\makecell{76.77}&  \makecell{80.20\%$\pm$0.35} & \makecell{79.90\%$\pm$0.15} \\ \hline
    \end{tabular}
    }
    \caption{Comparative performance analysis across the full set of benchmark domains. \textit{\% score} is the average of the \% of instances solved in each domain. Values for solvers with randomized components represent the mean and include the standard deviation across 5 measurements, using seeds from 0 to 4.}
    \label{tab:comparative_performance_large1}
\end{table*}

\begin{table*}[tbhp]
    \centering
    \resizebox{\textwidth}{!}{
    \begin{tabular}{|l||c|c|c|c|c||c|c|c|c|} \hline
        \rule{0pt}{2.2ex} Domain  & Dual- & Apx-BFWS &  LAMA- & Scorpion & BFNoS-Dual & BFWS$_t$-hlm &BFWS$_t$-hlm &  BFWS$_t$-hlm & BFWS$_t$-hlm\\
                & BFWS & (Tarski) & First & Maidu      &            & op$_r^{rp}$ & ol$_r^{rp}$ & ol$_r^{rp}$-UTP & ol$_r^{rp}$-Dual\\ 
        \hline
        agricola-sat18-strips   &\makecell{13}&\makecell{{18$\pm$0.6}}&\makecell{12}&\makecell{12}&\makecell{ 15$\pm$0.0 }        & \makecell{ 13$\pm$1.5 }&\makecell{14$\pm$1.0}&\makecell{14$\pm$0.6}&  \makecell{ 14$\pm$0.6 }    \\ 
        airport                 &\makecell{46}&\makecell{{47$\pm$0.6}}&\makecell{34}&\makecell{38}&\makecell{ 46$\pm$0.6 }       & \makecell{ 47$\pm$0.0 }&\makecell{47$\pm$0.0}&\makecell{47$\pm$0.0}&  \makecell{ 47$\pm$0.0 }    \\
        assembly                &\makecell{30}&\makecell{30$\pm$0.0}&\makecell{30}&\makecell{30}&\makecell{ 30$\pm$0.0 }         & \makecell{ 30$\pm$0.0 }&\makecell{30$\pm$0.0}&\makecell{30$\pm$0.6}&  \makecell{ 30$\pm$0.0 }    \\
        barman-sat14-strips     &\makecell{20}&\makecell{20$\pm$0.0}&\makecell{20}&\makecell{20}&\makecell{ 20$\pm$0.0 }          & \makecell{ 20$\pm$0.0 }&\makecell{20$\pm$0.0}&\makecell{20$\pm$0.0}&  \makecell{ 20$\pm$0.0 }     \\ 
        blocks                  &\makecell{35}&\makecell{35$\pm$0.0}&\makecell{35}&\makecell{35}&\makecell{ 35$\pm$0.0 }           &\makecell{ 35$\pm$0.0 }&\makecell{35$\pm$0.0}&\makecell{35$\pm$0.0}&  \makecell{ 35$\pm$0.0 }    \\
        caldera-sat18-adl       &\makecell{{18}}&\makecell{{19$\pm$0.6}}&\makecell{16}&\makecell{16}&\makecell{ 16$\pm$0.0 }       &\makecell{ 18$\pm$0.6 }&\makecell{18$\pm$0.6}&\makecell{18$\pm$0.6}&    \makecell{ 16$\pm$0.0 }  \\ 
        cavediving-14-adl       &\makecell{{8}}&\makecell{{8$\pm$0.6}}&\makecell{7}&\makecell{7}&\makecell{ 8$\pm$0.0 }            &\makecell{ 7$\pm$0.6 }&\makecell{7$\pm$0.6}&\makecell{{7$\pm$0.0}}&   \makecell{ 8$\pm$0.0 }    \\
        childsnack-sat14-strips &\makecell{{9}}&\makecell{5$\pm$1.5}&\makecell{6}&\makecell{6}&\makecell{ 8$\pm$0.0 }              &\makecell{ 0$\pm$0.6 }&\makecell{0$\pm$0.0}&\makecell{0$\pm$0.0}&   \makecell{ 8$\pm$0.6 }    \\ 
        citycar-sat14-adl       &\makecell{{20}}&\makecell{{20$\pm$0.0}}&\makecell{5}&\makecell{7}&\makecell{ 20$\pm$0.0 }         & \makecell{ 19$\pm$0.0 }&\makecell{19$\pm$0.6}&\makecell{{20$\pm$0.0}}&  \makecell{ 19$\pm$0.6 }   \\ 
        data-network-sat18-strips &\makecell{16}&\makecell{{19$\pm$0.0}}&\makecell{13}&\makecell{16}&\makecell{ 15$\pm$0.6 }       & \makecell{ 18$\pm$0.0 }&\makecell{18$\pm$0.0}&\makecell{18$\pm$0.0}&   \makecell{ 18$\pm$0.0 }   \\ 
        depot                   &\makecell{{22}}&\makecell{{22$\pm$0.0}}&\makecell{20}&\makecell{{22}}&\makecell{ 22$\pm$0.0 }     & \makecell{ 22$\pm$0.0 }&\makecell{{22$\pm$0.0}}&\makecell{{22$\pm$0.0}}&    \makecell{ 22$\pm$0.0 }    \\ 
        driverlog               &\makecell{20}&\makecell{20$\pm$0.0}&\makecell{20}&\makecell{20}&\makecell{ 20$\pm$0.0 }           &\makecell{ 20$\pm$0.0 }&\makecell{20$\pm$0.0}&\makecell{20$\pm$0.0}&   \makecell{ 20$\pm$0.0 }    \\
        elevators-sat11-strips  &\makecell{20}&\makecell{20$\pm$0.0}&\makecell{20}&\makecell{20}&\makecell{ 20$\pm$0.0 }           &\makecell{ 20$\pm$0.0 }&\makecell{20$\pm$0.0}&\makecell{20$\pm$0.0}&    \makecell{ 20$\pm$0.0 }   \\ 
        flashfill-sat18-adl     &\makecell{{17}}&\makecell{15$\pm$1.0}&\makecell{14}&\makecell{15}&\makecell{ 17$\pm$0.0 }         & \makecell{ 16$\pm$0.6 }&\makecell{17$\pm$0.6}&\makecell{{15$\pm$0.6}}&   \makecell{ 18$\pm$0.6 }     \\
        floortile-sat14-strips  &\makecell{2}&\makecell{2$\pm$0.0}&\makecell{2}&\makecell{2}&\makecell{ 2$\pm$0.0 }                &\makecell{ 2$\pm$0.0 }&\makecell{{1$\pm$0.6}}&\makecell{1$\pm$0.6}&   \makecell{ 2$\pm$0.0 }     \\ 
        folding                 &\makecell{5}&\makecell{5$\pm$0.6}&\makecell{{11}}&\makecell{{11}}&\makecell{ 9$\pm$0.0 }          &\makecell{ 9$\pm$0.6 }&\makecell{{8$\pm$0.6}}&\makecell{9$\pm$1.0}&    \makecell{ 8$\pm$0.6 }    \\
        freecell                &\makecell{{80}}&\makecell{{80$\pm$0.0}}&\makecell{79}&\makecell{{80}}&\makecell{ 80$\pm$0.0 }     & \makecell{ 80$\pm$0.0 }&\makecell{{80$\pm$0.0}}&\makecell{{80$\pm$0.0}}&   \makecell{ 80$\pm$0.0 }     \\ 
        ged-sat14-strips        &\makecell{20}&\makecell{20$\pm$0.0}&\makecell{20}&\makecell{20}&\makecell{ 20$\pm$0.0 }           &\makecell{ 20$\pm$0.0 }&\makecell{20$\pm$0.0}&\makecell{20$\pm$0.0}&   \makecell{ 20$\pm$0.0 }    \\
        grid                    &\makecell{5}&\makecell{5$\pm$0.0}&\makecell{5}&\makecell{5}&\makecell{ 5$\pm$0.0 }                &\makecell{ 5$\pm$0.0 }&\makecell{5$\pm$0.0}&\makecell{5$\pm$0.0}&   \makecell{ 5$\pm$0.0 }     \\ 
        gripper                 &\makecell{20}&\makecell{20$\pm$0.0}&\makecell{20}&\makecell{20}&\makecell{ 20$\pm$0.0 }           &\makecell{ 20$\pm$0.0 }&\makecell{20$\pm$0.0}&\makecell{20$\pm$0.0}&   \makecell{ 20$\pm$0.0 }     \\ 
        hiking-sat14-strips     &\makecell{18}&\makecell{{20$\pm$0.0}}&\makecell{{20}}&\makecell{{20}}&\makecell{ 20$\pm$0.0 }     &\makecell{ 20$\pm$0.0 }&\makecell{{20$\pm$0.0}}&\makecell{{20$\pm$0.0}}&   \makecell{ 20$\pm$0.0 }    \\ 
        labyrinth               &\makecell{5}&\makecell{{18$\pm$0.6}}&\makecell{1}&\makecell{0}&\makecell{ 15$\pm$0.6 }            &\makecell{ 15$\pm$0.0 }&\makecell{15$\pm$0.0}&\makecell{15$\pm$0.0}&   \makecell{ 15$\pm$0.0 }    \\ 
        logistics00             &\makecell{28}&\makecell{28$\pm$0.0}&\makecell{28}&\makecell{28}&\makecell{ 28$\pm$0.0 }           &\makecell{ 28$\pm$0.0 }&\makecell{28$\pm$0.0}&\makecell{28$\pm$0.0}&  \makecell{ 28$\pm$0.0 }    \\ 
        maintenance-sat14-adl   &\makecell{{17}}&\makecell{{17$\pm$0.0}}&\makecell{11}&\makecell{13}&\makecell{ 17$\pm$0.0 }       & \makecell{ 17$\pm$0.0 }&\makecell{17$\pm$0.0}&\makecell{{17$\pm$0.0}}&  \makecell{ 17$\pm$0.0 }    \\ 
        miconic                 &\makecell{150}&\makecell{150$\pm$0.0}&\makecell{150}&\makecell{150}&\makecell{ 150$\pm$0.0 }      &\makecell{ 150$\pm$0.0 }&\makecell{150$\pm$0.0}&\makecell{150$\pm$0.0}&    \makecell{ 150$\pm$0.0 }   \\ 
        movie                   &\makecell{30}&\makecell{30$\pm$0.0}&\makecell{30}&\makecell{30}&\makecell{ 30$\pm$0.0 }           &\makecell{ 30$\pm$0.0 }&\makecell{30$\pm$0.0}&\makecell{30$\pm$0.0}&  \makecell{ 30$\pm$0.0 }     \\ 
        mprime                  &\makecell{35}&\makecell{35$\pm$0.0}&\makecell{35}&\makecell{35}&\makecell{ 35$\pm$0.0 }           &\makecell{ 35$\pm$0.0 }&\makecell{35$\pm$0.0}&\makecell{35$\pm$0.0}& \makecell{ 35$\pm$0.0 }     \\ 
        mystery                 &\makecell{{19}}&\makecell{{19$\pm$0.0}}&\makecell{{19}}&\makecell{{19}}&\makecell{ 19$\pm$0.0 }   &\makecell{ 18$\pm$0.6 }&\makecell{{19$\pm$0.0}}&\makecell{{19$\pm$0.0}}&   \makecell{ 19$\pm$0.0 }    \\ 
        nomystery-sat11-strips  &\makecell{{19}}&\makecell{14$\pm$1.0}&\makecell{11}&\makecell{{19}}&\makecell{ 19$\pm$0.0 }       &\makecell{ 17$\pm$0.6 }&\makecell{15$\pm$0.6}&\makecell{{17$\pm$0.6}}&    \makecell{ 19$\pm$0.0 }   \\ 
        nurikabe-sat18-adl      &\makecell{14}&\makecell{18$\pm$0.6}&\makecell{9}&\makecell{11}&\makecell{ 16$\pm$0.6 }            &\makecell{ 14$\pm$0.0 }&\makecell{16$\pm$0.6}&\makecell{15$\pm$0.0}&  \makecell{ 16$\pm$0.6 }  \\ 
        openstacks-sat14-strips &\makecell{20}&\makecell{20$\pm$0.0}&\makecell{20}&\makecell{20}&\makecell{ 20$\pm$0.0 }           &\makecell{ 20$\pm$0.0 }&\makecell{20$\pm$0.0}&\makecell{20$\pm$0.0}&     \makecell{ 20$\pm$0.0 }  \\ 
        organic-synthesis-split-sat18-strips &\makecell{12}&\makecell{8$\pm$0.6}&\makecell{{14}}&\makecell{{14}}&\makecell{ 12$\pm$0.0 }   & \makecell{ 5$\pm$0.6 } &\makecell{{5$\pm$0.0}}&\makecell{4$\pm$0.5}&   \makecell{ 12$\pm$0.0 } \\ 
        parcprinter-sat11-strips &\makecell{16}&\makecell{10$\pm$0.6}&\makecell{{20}}&\makecell{{20}}&\makecell{ 20$\pm$0.0 }      &\makecell{ 15$\pm$0.0 }&\makecell{{16$\pm$0.0}}&\makecell{{16$\pm$0.6}}&    \makecell{ 20$\pm$0.0 }   \\ 
        parking-sat14-strips    &\makecell{20}&\makecell{20$\pm$0.0}&\makecell{20}&\makecell{20}&\makecell{ 20$\pm$0.0 }           &\makecell{ 20$\pm$0.0 }&\makecell{20$\pm$0.0}&\makecell{20$\pm$0.0}&    \makecell{ 20$\pm$0.0 }   \\ 
        pathways                &\makecell{{30}}&\makecell{29$\pm$1.5}&\makecell{23}&\makecell{25}&\makecell{ 30$\pm$0.0 }         &\makecell{ 30$\pm$0.6 }&\makecell{30$\pm$0.6}&\makecell{{29$\pm$0.6}}&    \makecell{ 30$\pm$0.0 }   \\ 
        pegsol-sat11-strips     &\makecell{20}&\makecell{20$\pm$0.0}&\makecell{20}&\makecell{20}&\makecell{ 20$\pm$0.0 }           &\makecell{ 20$\pm$0.0 }&\makecell{20$\pm$0.0}&\makecell{20$\pm$0.0}&    \makecell{ 20$\pm$0.0 }   \\ 
        pipesworld-notankage    &\makecell{{50}}&\makecell{{50$\pm$0.0}}&\makecell{43}&\makecell{45}&\makecell{ 50$\pm$0.0 }       &\makecell{ 50$\pm$0.0 }&\makecell{50$\pm$0.0}&\makecell{50$\pm$0.0}&   \makecell{ 50$\pm$0.0 }    \\ 
        pipesworld-tankage      &\makecell{42}&\makecell{{45$\pm$0.6}}&\makecell{43}&\makecell{43}&\makecell{ 43$\pm$0.6 }         &\makecell{ 44$\pm$1.2 }&\makecell{48$\pm$1.0}&\makecell{47$\pm$1.0}&   \makecell{ 48$\pm$1.0 }   \\ 
        psr-small               &\makecell{50}&\makecell{50$\pm$0.0}&\makecell{50}&\makecell{50}&\makecell{ 50$\pm$0.0 }           &\makecell{ 49$\pm$0.0 }&\makecell{49$\pm$0.0}&\makecell{49$\pm$0.0}&    \makecell{ 50$\pm$0.0 }   \\ 
        quantum-layout          &\makecell{20}&\makecell{20$\pm$0.0}&\makecell{20}&\makecell{20}&\makecell{ 20$\pm$0.0 }           &\makecell{ 20$\pm$0.0 }&\makecell{20$\pm$0.0}&\makecell{20$\pm$0.0}&  \makecell{ 20$\pm$0.0 }    \\ 
        recharging-robots       &\makecell{11}&\makecell{{14$\pm$1.0}}&\makecell{13}&\makecell{13}&\makecell{ 14$\pm$1.0 }         &\makecell{ 14$\pm$0.0 }&\makecell{14$\pm$0.0}&\makecell{{14$\pm$0.0}}&    \makecell{ 14$\pm$0.6 }   \\ 
        ricochet-robots         &\makecell{{20}}&\makecell{18$\pm$0.0}&\makecell{14}&\makecell{18}&\makecell{ 20$\pm$0.0 }         &\makecell{ 19$\pm$0.0 }&\makecell{20$\pm$0.0}&\makecell{{20$\pm$0.0}}&   \makecell{ 20$\pm$0.0 }    \\ 
        rovers                  &\makecell{39}& \makecell{40$\pm$0.6}&\makecell{{40}}&\makecell{{40}}&\makecell{ 40$\pm$0.6 }      &\makecell{ 39$\pm$1.7 }&\makecell{{39$\pm$1.0}}&\makecell{40$\pm$0.0}&    \makecell{ 39$\pm$1.0 }   \\ 
        rubiks-cube             &\makecell{6}& \makecell{5$\pm$0.6}&\makecell{{20}}&\makecell{{20}}&\makecell{ 5$\pm$0.0 }         &\makecell{ 5$\pm$0.0 }&\makecell{{5$\pm$0.0}}&\makecell{5$\pm$0.0}&   \makecell{ 5$\pm$0.0 }   \\ 
        satellite               &\makecell{32}& \makecell{34$\pm$0.0}&\makecell{{36}}&\makecell{{36}}&\makecell{ 32$\pm$1.0 }      & \makecell{ 34$\pm$0.0 }&\makecell{{34$\pm$0.6}}&\makecell{33$\pm$0.6}&    \makecell{ 33$\pm$1.2 }   \\ 
        scanalyzer-sat11-strips &\makecell{20}& \makecell{20$\pm$0.6}&\makecell{20}&\makecell{20}&\makecell{ 20$\pm$0.0 }          &\makecell{ 20$\pm$0.0 }&\makecell{19$\pm$0.0}&\makecell{19$\pm$0.6}&    \makecell{ 19$\pm$0.0 }   \\ 
        schedule                &\makecell{{150}}&  \makecell{150$\pm$0.0}&\makecell{{150}}&\makecell{{150}}&\makecell{ 149$\pm$0.6 }   &\makecell{ 150$\pm$0.0 }&\makecell{{150$\pm$0.0}}&\makecell{150$\pm$0.0}&   \makecell{ 150$\pm$0.0 } \\ 
        settlers-sat18-adl      &\makecell{7}& \makecell{12$\pm$0.6}&\makecell{17}&\makecell{{18}}&\makecell{ 11$\pm$0.6 }         &\makecell{ 18$\pm$1.7 }&\makecell{{19$\pm$0.6}}&\makecell{19$\pm$0.6}&    \makecell{ 19$\pm$0.6 }   \\ 
        slitherlink             &\makecell{{6}}& \makecell{{5$\pm$0.6}}&\makecell{0}&\makecell{0}&\makecell{ 6$\pm$0.6 }           &\makecell{ 5$\pm$0.6 }&\makecell{7$\pm$0.0}&\makecell{{5$\pm$0.6}}&   \makecell{ 7$\pm$0.0 }    \\ 
        snake-sat18-strips      &\makecell{17}& \makecell{{20$\pm$0.0}}&\makecell{5}&\makecell{14}&\makecell{ 20$\pm$0.0 }         &\makecell{ 19$\pm$0.6 }&\makecell{19$\pm$0.0}&\makecell{{18$\pm$0.0}}&  \makecell{ 19$\pm$0.0 }    \\ 
        sokoban-sat11-strips    &\makecell{18}& \makecell{15$\pm$0.0}&\makecell{19}&\makecell{19}&\makecell{ 16$\pm$0.6 }          &\makecell{ 12$\pm$2.1 }&\makecell{{15$\pm$0.6}}&\makecell{13$\pm$1.0}&   \makecell{ 16$\pm$0.6 }   \\ 
        spider-sat18-strips     &\makecell{16}& \makecell{17$\pm$1.2}&\makecell{16}&\makecell{16}&\makecell{ 18$\pm$0.0 }          &\makecell{ 16$\pm$0.6 }&\makecell{19$\pm$0.6}&\makecell{{19$\pm$1.5}}&  \makecell{ 19$\pm$0.0 }     \\ 
        storage                 &\makecell{30}& \makecell{{30$\pm$0.0}}&\makecell{20}&\makecell{25}&\makecell{ 30$\pm$0.0 }        &\makecell{ 30$\pm$0.0 }&\makecell{30$\pm$0.6}&\makecell{30$\pm$0.5}&   \makecell{ 30$\pm$0.0 }   \\ 
        termes-sat18-strips     &\makecell{10}& \makecell{5$\pm$2.0}&\makecell{{16}}&\makecell{14}&\makecell{ 10$\pm$0.6 }         &\makecell{ 9$\pm$0.6 }&\makecell{10$\pm$0.6}&\makecell{10$\pm$0.6}&   \makecell{ 10$\pm$0.0 }  \\ 
        tetris-sat14-strips     &\makecell{17}& \makecell{{20$\pm$0.0}}&\makecell{16}&\makecell{17}&\makecell{ 20$\pm$0.0 }        &\makecell{ 20$\pm$0.0 }&\makecell{20$\pm$0.0}&\makecell{{20$\pm$0.0}}&  \makecell{ 20$\pm$0.0 }     \\ 
        thoughtful-sat14-strips &\makecell{{20}}& \makecell{{20$\pm$0.0}}&\makecell{15}&\makecell{19}&\makecell{ 20$\pm$0.0 }      &\makecell{ 20$\pm$0.0 }&\makecell{20$\pm$0.0}&\makecell{{20$\pm$0.0}}&   \makecell{ 20$\pm$0.0 }    \\ 
        tidybot-sat11-strips    &\makecell{18}& \makecell{{20$\pm$0.0}}&\makecell{17}&\makecell{{20}}&\makecell{ 20$\pm$0.0 }      &\makecell{ 20$\pm$0.0 }&\makecell{{19$\pm$0.0}}&\makecell{{19$\pm$0.0}}&    \makecell{ 19$\pm$0.0 }  \\ 
        tpp                     &\makecell{30}& \makecell{30$\pm$0.0}&\makecell{30}&\makecell{30}&\makecell{ 30$\pm$0.0 }          &\makecell{ 30$\pm$0.0 }&\makecell{30$\pm$0.0}&\makecell{30$\pm$0.0}&  \makecell{ 30$\pm$0.0 }  \\
        transport-sat14-strips  &\makecell{{20}}& \makecell{{20$\pm$0.0}}&\makecell{17}&\makecell{18}&\makecell{ 20$\pm$0.0 }      &\makecell{ 20$\pm$0.0 }&\makecell{20$\pm$0.0}&\makecell{{20$\pm$0.0}}&   \makecell{ 20$\pm$0.0 }   \\
        trucks-strips           &\makecell{19}& \makecell{13$\pm$1.0}&\makecell{18}&\makecell{20}&\makecell{ 18$\pm$0.0 }          &\makecell{ 9$\pm$1.2 }&\makecell{{9$\pm$0.6}}&\makecell{9$\pm$0.6}&  \makecell{ 18$\pm$0.0 }   \\ 
        visitall-sat14-strips   &\makecell{20}& \makecell{20$\pm$0.0}&\makecell{20}&\makecell{20}&\makecell{ 20$\pm$0.0 }          &\makecell{ 20$\pm$0.0 }&\makecell{20$\pm$0.0}&\makecell{20$\pm$0.0}&  \makecell{ 20$\pm$0.0 }   \\ 
        woodworking-sat11-strips &\makecell{{20}}& \makecell{13$\pm$1.5}&\makecell{{20}}&\makecell{{20}}&\makecell{ 20$\pm$0.0 }   & \makecell{ 20$\pm$0.0 }&\makecell{{20$\pm$0.0}}&\makecell{{20$\pm$0.0}}&    \makecell{ 20$\pm$0.0 }   \\ 
        zenotravel              &\makecell{20}& \makecell{20$\pm$0.0}&\makecell{20}&\makecell{20}&\makecell{ 20$\pm$0.0 }          &\makecell{ 20$\pm$0.0 }&\makecell{20$\pm$0.0}&\makecell{20$\pm$0.0}&   \makecell{ 20$\pm$0.0 }   \\ \hline
        \rule{0pt}{2.2ex}\textbf{Coverage (1831)} &\makecell{1607}& \makecell{1611$\pm$3.5}& \makecell{1535}& \makecell{1591}& \makecell{1641$\pm$0.6}                    &\makecell{ 1608$\pm$2.7 }&\makecell{1621$\pm$3.2}& \makecell{1616$\pm$2.1} & \makecell{1655$\pm$1.5}    \\ \hline
         \rule{0pt}{2.2ex}\textbf{\% Score (100\%)} &\makecell{83.56\%}& \makecell{83.83\%$\pm$0.17}& \makecell{79.07\%}& \makecell{82.92\%}& \makecell{86.23\%$\pm$0.06}       &\makecell{ 83.75\%$\pm$0.09 }&\makecell{84.59\%$\pm$0.25}& \makecell{84.22\%$\pm$0.13}& \makecell{86.99\%$\pm$0.08}   \\\hline
         \rule{0pt}{2.2ex}\textbf{Agile score}    &\makecell{1200.9}& \makecell{1233.7$\pm$0.24}& \makecell{1192.3}& \makecell{1206.4}&\makecell{1173.3$\pm$3.5}  & \makecell{ 1215.9$\pm$4.0 }&\makecell{1229.4$\pm$3.4} &\makecell{1236.0$\pm$1.9} &\makecell{1232.6$\pm$2.9}  \\ \hline
    \end{tabular}
    }
    \caption{Comparative performance analysis across the full set of benchmark domains. \textit{\% score} is the average of the \% of instances solved in each domain. Values for solvers with randomized components represent the mean and include the standard deviation across 3 measurements, using seeds from 0 to 2.}
    \label{tab:comparative_performance_large2}
\end{table*}

\end{document}

%% file: macros.tex
\usepackage{amssymb}

\newcommand{\tuple}[1]{\ensuremath{\langle #1 \rangle}} 	% tuple
\newcommand{\set}[1]{\{#1\}}                      	% set
\newcommand{\card}[1]{|{#1}|}                     	% cardinality of a set

\newcommand{\defterm}[1]{\textbf{\textit{#1}}}

\newcommand{\fancy}[1]{\ensuremath{\mathcal{#1}}}

\newcommand{\qq}[1]{``#1"}		  % quotes
\newcommand{\nspar}[1]{\vspace{2mm}\noindent \textbf{#1. }}
\newtheorem{theorem}{Theorem}

\newtheorem{lemma}{Lemma}
\newtheorem{claim}{Claim}
\newtheorem{corollary}{Corollary}

\makeatletter
\let\r@eqnnum\@eqnnum
\input{leqno.clo}
\let\l@eqnnum\@eqnnum
\newcommand{\leqnos}{\let\@eqnnum\l@eqnnum}
\newcommand{\reqnos}{\let\@eqnnum\r@eqnnum}
\reqnos
\makeatother

\newcommand{\E}{\modecal{E}} 
 
 \newcommand{\J}{\modecal{J}}
 \renewcommand{\L}{\modecal{L}}
\newcommand{\M}{\modecal{M}} 
\renewcommand{\O}{\modecal{O}} \renewcommand{\P}{\modecal{P}}
 \newcommand{\T}{\modecal{T}}
 
 \newcommand{\X}{\modecal{X}}
\newcommand{\Y}{\modecal{Y}} \newcommand{\Z}{\modecal{Z}}

\newcommand{\modecal}[1]{\ensuremath{\mathcal{#1}}}

\newcounter{rgcounter}
\refstepcounter{rgcounter}

\usepackage{xspace}

  \newcommand\wrapfill{\par
   \ifx\parshape\WF@fudgeparshape
   \nobreak
   \vskip-\baselineskip
   \vskip\c@WF@wrappedlines\baselineskip
   \allowbreak
   \WFclear
   \fi
 }

 \usepackage{pifont}% http://ctan.org/pkg/pifont
\usepackage[
  textsize=scriptsize,
  colorinlistoftodos,
  ]{todonotes}
\let\todonote\todo
\renewcommand{\todo}[2]{\todonote[color=red!50]{(#1): #2}}